\definecolor{Tan1}{RGB}{255,165,79}
\newtheorem{definition}{Definition}
\newtheorem{proposition}{Proposition}
\begin{document}

\title{Deep into The Domain Shift: Transfer Learning through Dependence Regularization}

\author{Shumin Ma, Zhiri Yuan, Qi Wu, Yiyan Huang, Xixu Hu, Cheuk Hang Leung, Dongdong Wang, Zhixiang Huang
\thanks{Shumin Ma is with Guangdong Provincial Key Laboratory of Interdisciplinary Research and Application for Data Science, BNU-HKBU United International College; Division of Science and Technology, BNU-HKBU United International College, Zhuhai 519087, China.}
\thanks{Zhiri Yuan, Yiyan Huang, Xixu Hu, and Cheuk Huang Leung are with the CityU-JD Digits Joint Laboratory in Financial Technology and Engineering and the School of Data Science at the City University of Hong Kong, Hong Kong.}
\thanks{Qi Wu is with the School of Data Science, The CityU-JD Digits Joint Laboratory in Financial Technology and Engineering, and the Institute of Data Science at the City University of Hong Kong. He is also with The Laboratory for AI-Powered Financial Technologies Limited, Hong Kong.}
\thanks{Dongdong Wang and Zhixiang Huang are with JD Digits Technology, Beijing, China.}
\thanks{Corresponding author: Qi Wu.}}



\maketitle

\begin{abstract}
Classical Domain Adaptation methods acquire transferability by regularizing the overall distributional discrepancies between features in the source domain (labeled) and features in the target domain (unlabeled). They often do not differentiate whether the domain differences come from the marginals or the dependence structures. In many business and financial applications, the labeling function usually has different sensitivities to the changes in the marginals versus changes in the dependence structures. Measuring the overall distributional differences will not be discriminative enough in acquiring transferability. Without the needed structural resolution, the learned transfer is less optimal. This paper proposes a new domain adaptation approach in which one can measure the differences in the internal dependence structure separately from those in the marginals. By optimizing the relative weights among them, the new regularization strategy greatly relaxes the rigidness of the existing approaches. It allows a learning machine to pay special attention to places where the differences matter the most. Experiments on three real-world datasets show that the improvements are quite notable and robust compared to various benchmark domain adaptation models.
\end{abstract}

\begin{IEEEkeywords}
domain adaptation, regularization, domain divergence, copula.
\end{IEEEkeywords}

\section{Introduction}
\IEEEPARstart{U}{nsupervised} domain adaptation emerges when one estimates a prediction function in a given target domain without any labeled samples by exploiting the knowledge available from a source domain where labels are known. The critical step in the transfer is to extract feature representations that are invariant across domains. A large body of work learns the domain-invariant feature representations by minimizing various metrics on the feature distributions between domains:  Proxy-$\mathcal{A}$ distance \cite{ganin2016domain}, total variation distance \cite{tzeng2017adversarial, zhao2018adversarial, liu2019transferable}, maximum mean discrepancy (MMD, \cite{gretton2012kernel, long2015learning, long2017deep}), Wasserstein distance \cite{courty2016optimal, shen2018wasserstein}, etc. It is worth noting that in most of the literature, the domain invariance is measured on the overall feature distributions between the source and the target domains.

\begin{figure*}[!htbp]
\centering
 \includegraphics[width=0.6\columnwidth]{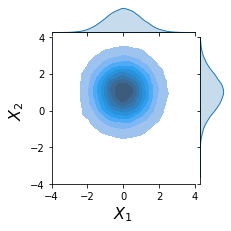}
 \includegraphics[width=0.6\columnwidth]{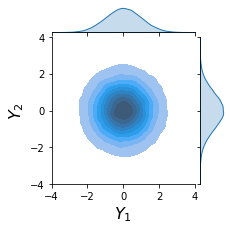}
 \includegraphics[width=0.6\columnwidth]{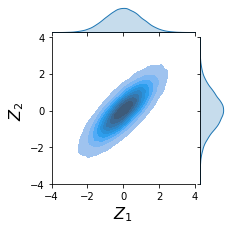}
 \caption{Visualization of three 2-d Gaussian distributions: ($P^{\mathbf{X}}$) 
 $N(
\begin{bmatrix}
0\\
1
\end{bmatrix},
\begin{bmatrix}
1 & 0\\
0 & 1
\end{bmatrix}
)$, ($P^{\mathbf{Y}}$)
$N(
\begin{bmatrix}
0\\
0
\end{bmatrix},
\begin{bmatrix}
1 & 0\\
0 & 1
\end{bmatrix}
)$, ($P^{\mathbf{Z}}$)
 $N(
\begin{bmatrix}
0\\
0
\end{bmatrix},
\begin{bmatrix}
1 & \sqrt{1-e^{-1}}\\
\sqrt{1-e^{-1}} & 1
\end{bmatrix}
)$. $P^{\mathbf{X}}$ and $P^{\mathbf{Y}}$ differ only in the 2nd marginal distribution, while $P^{\mathbf{Z}}$ and $P^{\mathbf{Y}}$ differ in the dependence structure. One can check that the KL divergence between $P^{\mathbf{X}}$ and $P^{\mathbf{Y}}$ is $1/2$ and it is the same as the KL divergence between $P^{\mathbf{Z}}$ and $P^{\mathbf{Y}}$. That is, one single divergence value ($1/2$ in this example) can not distinguish whether the distribution difference comes from the marginals or the dependence structure.}
 \label{Fig_KL example}
\end{figure*}

However, the overall feature distribution difference encodes both the marginals' distinctions and the dependence difference into one metric value, making it hard to identify whether the domain difference comes from the marginals or the dependence structure. As a motivation example, we use Figure \ref{Fig_KL example} to clarify this point. In Figure \ref{Fig_KL example}, there are three random vectors ($\mathbf{X}$, $\mathbf{Y}$ and $\mathbf{Z}$) that follow three different Gaussian distributions ($P^\mathbf{X}$, $P^\mathbf{Y}$ and $P^\mathbf{Z}$, with the details shown in the caption) respectively. It can be observed that $P^{\mathbf{X}}$ and $P^{\mathbf{Y}}$ only differ in the 2nd marginal distribution where $P^{\mathbf{X}}_2$ is $N(1,1)$ and $P^{\mathbf{Y}}_2$ is $N(0,1)$. Namely, the overall distribution difference is completely caused by the marginal distinction. However, for $P^{\mathbf{Y}}$ and $P^{\mathbf{Z}}$, it is well observed that the marginals are the same while the covariance matrix differs. Namely, the overall distribution difference over the latter two distributions is controlled by the dependence difference. In general cases, any two distributions can differ in the marginal distributions and the dependence structure simultaneously. One can check that the KL divergence between $P^{\mathbf{X}}$ and $P^{\mathbf{Y}}$ is the same with that of $P^{\mathbf{Z}}$ and $P^{\mathbf{Y}}$, which is $1/2$ in the example. That is to say, a single divergence value cannot distinguish between marginal difference and dependence difference. Furthermore, the overall feature distributions' divergence sums the marginals' and dependence differences up together in a relatively fixed manner. Such rigidness is undesirable when the prediction function has different sensitivities to the changes in the marginals versus changes in the dependence structure. Especially in many financial and business applications, the prediction function may depend heavily on the dependence structure of the features. For example, during the market crash, the stocks show remarkably synchronous co-movement, implying a stronger dependence than regular times. Such observations motivate us to relax the binding between the marginals' and dependence differences in one metric value and pay attention to the difference term that matters the most in the transfer.

We propose a new domain adaptation model to regularize the domain differences in the dependence structure via the copula distance, separately from the marginal divergence. The idea is inspired by Sklar's Theorem \cite{sklar1959fonctions} which states that any multivariate distribution can be decomposed as the product of marginal distributions and a copula function, and vice versa. It explicitly shows that the copula function, together with the marginal distributions, is sufficient to recover the original multivariate distribution. The efficacy and versatility of our approach are demonstrated with real-world classification and regression problems.

The contributions of this paper are summarized as follows. (1) We propose a novel deep domain adaptation framework that allows more flexibility to combine the marginal difference and the dependence difference into a regularizer. (2) We explore the structural properties of the copula distance that guarantee the algorithm convergence of our approach. (3) Our proposed model proves its efficiency on two novel datasets (a large-scale retail credit dataset and an intra-day equity price dataset) and one standard UCI dataset.

\section{Related work}

Due to the ability of deep neural nets to learn rich feature representations, deep domain adaptation models have focused on using these networks to learn invariant representations, i.e., intermediate features whose distributions are the same in the source and the target domains, while at the same time achieving a small prediction error on the source domain. The hope is that the learned representation, together with the hypothesis learned from the source domain, can generalize to the target domain. There are many ways to measure the domain invariance \cite{zhao2020review}. \cite{duan2012domain} uses $L_2$ norm to directly align features of different domains. \cite{long2015learning} proposes a Deep Adaptation Network (DAN) architecture that embeds the feature representations in a reproducing kernel Hilbert space (RKHS) and reduces the domain discrepancy through multi-kernel MMD. \cite{ganin2016domain} proposes a domain adversarial neural network (DANN) to learn the domain-invariant features with a min-max formulation. \cite{zhao2019learning} characterizes a fundamental tradeoff between learning invariant representations and achieving a small joint error on both domains when the marginal label distributions differ from the source to the target. Furthermore, \cite{chen2020subspace} tackles the knowledge transfer problem under the generalized covariate shift condition by Bregman divergence. \cite{wang2019domain} proposes a novel neural embedding matching method by enforcing consistent class-wise cross-domain instance distributions in the embedding space. \cite{zhang2019guide} proposes a two-stage progressive training strategy to learn invariant, discriminative, and domain-agnostic subspace. Another line of work proposes to reduce the domain discrepancy through minimizing the optimal transport loss between the source and target distributions \cite{courty2016optimal, shen2018wasserstein}. For example, \cite{shen2018wasserstein} minimizes the empirical Wasserstein distance between the source and target samples. However, these methods focus on the overall distribution discrepancy and often do not differentiate whether the domain differences come from the marginals or the dependence structure.

To explicitly encode the dependence difference in the domain adaptation framework, \cite{sun2016return} proposes a correlation alignment (CORAL) model to measure the dis-similarity by the Frobenius norm of the covariance matrices from the two domains. \cite{sun2016deep} further combines CORAL with deep neural networks and verifies its effectiveness through extensive experiments on the standard benchmark datasets. \cite{zhang2018aligning} matches distributions by aligning the RKHS covariance matrices across domains. \cite{chen2020homm} integrates the MMD and CORAL into a unified framework and exploits the higher-order statistics for domain alignment. Our approach is more general in that it separates the marginals' divergence and the dependence difference and integrates them into one regularizer. It facilitates us to detect the changes in the marginals and the dependence structure simultaneously.

Our work is closely related to copulas. Copulas have been successfully used in many deep learning methods. \cite{zhao2020missing} performs the missing value imputation by developing a semi-parametric algorithm to estimate copula parameters from incomplete mixed data. \cite{marti2017exploring} and \cite{tran2017unsupervised} propose to summarize and measure the pairwise correlations between variables, which is shown to well capture the various dependence patterns. By assuming the underlying features have a specific structure, \cite{lopez2012semi} adopts non-parametric vine copula for semi-supervised domain adaptation problems. \cite{letizia2020segmented} uses copula to generate dependent data in a segmented way. \cite{xuan2017doubly} incorporates copula into a doubly nonparametric sparse nonnegative matrix factorization framework. \cite{quan2019survey} well documents the literature that takes advantage of copulas to model the correlation in multivariate data in smart grid. We extend the idea of copulas and construct a copula-based divergence measure to quantify the dependence difference. Our proposed measure incorporates more statistical information than the commonly-used covariance matrices that only capture the second-order statistics.

\section{Copula distance}\label{SecCD}

Suppose that a $d$-dimensional random variable $\mathbf{X} = [X_1, \dots, X_d]$ is characterized by the cumulative distribution function (CDF) $P$ and its density function is denoted by $p$. Sklar's theorem \cite{sklar1959fonctions} states that there exists a copula $C$ such that $P(x_1, \dots, x_d) = C(P_1(x_1),\dots,P_d(x_d))$, where $P_i(\cdot)$ is the marginal CDF. Furthermore, any continuous density function $p(x_1, \dots, x_d)$ can be written in terms of univariate marginal density functions $\{p_i(x_i)\}_{i=1}^d$ and a unique copula density function $c:[0,1]^d\rightarrow \mathbb{R}$ which characterizes the dependence structure: 
\begin{equation}\label{EqCopulaDensity}
p(x_1, \dots, x_d) = c(u_1,\ldots,u_d)\times \prod\nolimits_{i=1}\nolimits^d p_i(x_i),
\end{equation} 
where $ u_i:=P_i(x_i),\, \forall\,1\leq i\leq d$.

One can use the copula function to extract a clean quantification of the dependence strength between any components [$X_i$, $X_j$] in the random vector $\mathbf{X}$. For example, the \textit{mutual information} between $X_i$ and $X_j$, a well-known dependence measure in information theory, is equivalent to the negative entropy of the copula function, namely,
\begin{equation*}
\small
\mathcal{H}_{KL}(P_{ij},P_i P_j) = \int_0^1 \int_0^1 c_{ij}(u_i,u_j)\log c_{ij}(u_i,u_j) \mathrm{d}u_i \mathrm{d}u_j.
\end{equation*}
Here, $\mathcal{H}_{KL}$ denotes the Kullback-Leibler (KL) divergence, $P_{ij}$ is the joint distribution of $[X_i,X_j]$, and $P_i$, $P_j$ represent the marginal distributions. $c_{ij}(u_i,u_j)$ is short for the density function $c(1,\ldots,1,u_i,1,\ldots,1,u_j,1,\ldots,1)$ where the $i$-th and the $j$-th arguments are $u_i$ and $u_j$ respectively, and the other arguments are all $1$. It should be noted that mutual information is a special case of a more general dependence measurement framework \cite{marti2017exploring} that computes the distance $\mathcal{H}(P_{ij},P_i P_j)$ with any divergence measure $\mathcal{H}$. We list a few examples below where $\mathcal{H}$ takes $\chi^2$ distance \cite{NIPS2016_cedebb6e}, Hellinger distance \cite{NIPS2016_cedebb6e}, and $\alpha$-divergence \cite{pantazis2022cumulant} (the detailed derivation can be found in the Appendix):
\begin{itemize}
\setlength{\itemsep}{0pt}
\setlength{\parsep}{0pt}
\setlength{\parskip}{0pt}
\item[-] $\mathcal{H}_{\chi^2}(P_{ij},P_i P_j) = \int_0^1 \int_0^1 (c_{ij}^2(u_i,u_j)-1)\mathrm{d}u_i \mathrm{d}u_j$.
\item[-] $\mathcal{H}_{H}(P_{ij},P_i P_j) = \int_0^1 \int_0^1 [\sqrt{c_{ij}(u_i,u_j)}-1]^2\mathrm{d}u_i \mathrm{d}u_j$.
\item[-] $\mathcal{H}_{\alpha}(P_{ij},P_i P_j) = \frac{1}{1-\alpha^2}\int_0^1 \int_0^1 [1-{c_{ij}(u_i,u_j)}^{-\frac{\alpha+1}{2}}]c_{ij}(u_i,$ $u_j)\mathrm{d}u_i \mathrm{d}u_j$.
\end{itemize}

For most of the divergence measures $\mathcal{H}$, it can be proved that $\mathcal{H}(P_{ij},P_i P_j)$ is a function of the copula densities (see Appendix). For a given random vector $\mathbf{X}$, if $\mathcal{H}(P_{ij},P_i P_j)$ provides a clean measure of any of its component pairs [$X_i$, $X_j$], one can run through the indexes, give them different weights $\beta_{ij}$, and sum over all the weighted $\mathcal{H}$s. The resulting quantity would allow one to look into the complete internal structure of $\mathbf{X}$ along the direction of any user-defined attention angle using any measure $\mathcal{H}$. This observation motivates us to use such a measure to quantify the difference in internal dependencies between two random vectors $\mathbf{X}$ and $\mathbf{Y}$. Below is its precise definition.

\begin{table*}[!hbp]
\centering
\small
\caption{For any two adjacent distributions in Figure 1, the overall distribution divergence, the $1^{st}$-d and the $2^{nd}$-d marginal distinction, together with the copula distance (CD) are recorded. Specifically, we take the KL divergence, Jenson-Shannon (JS) divergence and MMD as the distribution divergence measure for illustration purposes. }
\begin{tabular}{lcccccccc} 
\toprule
& \multicolumn{4}{c}{$P^{\mathbf{X}}$ vs. $P^{\mathbf{Y}}$} & \multicolumn{4}{c}{$P^{\mathbf{Z}}$ vs. $P^{\mathbf{Y}}$} \\
\cmidrule(lr){2-5} \cmidrule(lr){6-9}
      $\mathcal{H}$  & $\mathcal{H}(P^{\mathbf{X}},P^{\mathbf{Y}})$  & $\mathcal{H}(P^{\mathbf{X}}_1,P^{\mathbf{Y}}_1)$  & $\mathcal{H}(P^{\mathbf{X}}_2,P^{\mathbf{Y}}_2)$  & CD & $\mathcal{H}(P^{\mathbf{Z}},P^{\mathbf{Y}})$ & $\mathcal{H}(P^{\mathbf{Z}}_1,P^{\mathbf{Y}}_1)$  & $\mathcal{H}(P^{\mathbf{Z}}_2,P^{\mathbf{Y}}_2)$  & CD \\
\midrule
KL & 0.5 & 0 & 0.5 & 0 & 0.5 & 0 &  0 & 0.5 \\
JS & 0.5 & 0 & 0.5 & 0 & 0.859 & 0 & 0 & 0.859 \\
MMD & 0.107 & 0 & 0.175 & 0 & 0.042 & 0 &  0 & 0.042 \\
\bottomrule
\end{tabular}
\label{Tb_dist_example}
\end{table*}

\begin{definition}\label{Def_CD} (\textbf{Copula distance})
Let $\mathbf{X} = [X_1,\ldots,X_d]\in \mathbb{R}^d$ and $\mathbf{Y} = [Y_1,\ldots,Y_d] \in \mathbb{R}^d$ be two random vectors. Let $P_{ij}^{\mathbf{X}}$ and $P_{ij}^{\mathbf{Y}}$ be the cumulative joint distributions of any of their component pairs [$X_i$, $X_j$] and [$Y_i$, $Y_j$], respectively. Let $\mathcal{H}(\cdot,\cdot)$ be a measure of distribution difference. We define the copula distance between $[X_i,X_j]$ and $[Y_i,Y_j]$ ($\forall 1\leq i < j \leq d$) as:
\begin{equation*}
\resizebox{.95\linewidth}{!}{$
    \displaystyle
CD_{\mathcal{H}}([X_i,X_j],[Y_i,Y_j])=|\mathcal{H}(P^{\mathbf{X}}_{ij},P^{\mathbf{X}}_i P^{\mathbf{X}}_j) - \mathcal{H}(P^{\mathbf{Y}}_{ij},P^{\mathbf{Y}}_i P^{\mathbf{Y}}_j)|.
$}
\end{equation*}
The copula distance between random vectors $\mathbf{X}$ and $\mathbf{Y}$ w.r.t. the positive weights $\bm{\beta}=\{\beta_{ij} \}_{1 \leq i < j \leq d}$ is defined as:
\begin{equation}\label{EqCD}
\resizebox{.91\linewidth}{!}{$
    \displaystyle
CD_{\mathcal{H}}(\mathbf{X},\mathbf{Y};\bm{\beta}) = \sum \limits_{1\leq i < j \leq d} \beta_{ij} CD_{\mathcal{H}}([X_i,X_j],[Y_i,Y_j]).
$}
\end{equation}
\end{definition}

As explained before, although the copula distance seems irrelevant to the copulas at first glance, it is actually a function of the copula densities. In the following context, we will use a set of $d$ terms $\{\mathcal{H}(P^{\mathbf{X}}_i, P^{\mathbf{Y}}_i)\}_{i=1}^d$ to measure the marginals' distinctions and the copula distance $CD_{\mathcal{H}}(\mathbf{X},\mathbf{Y};\bm{\beta})$ to measure the dependence difference of any two random vectors $\mathbf{X}\in \mathbb{R}^d$ and $\mathbf{Y}\in \mathbb{R}^d$. To explain the concepts in more details, we use the distributions in Figure 1. We record the marginals' distinctions and dependence difference between each two of the three distributions under KL divergence (KL), Jenson-Shannon divergence (JS) and MMD in Table \ref{Tb_dist_example}.

When the copula density takes some specific function form, it can be proved that the copula distance has two nice properties: bounded and monotonic. These two properties guarantee the convergence of our proposed algorithms in the later sections. There are multiple parametric copula density functions (Gaussian copula, $t$ copula, Gumbel copula, etc.) that can be effectively incorporated into our model framework. In this work, we take the copula density in the form of Gaussian copula. Gaussian copula has been widely used in the financial literature because it is convenient to capture the dependence embedded in the random vector \cite{donnelly_embrechts_2010, FANG2013292, wang2014semiparametric}. A random vector $\mathbf{X} \in \mathbb{R}^d$ is said to have Gaussian copula with parameter $\Sigma \in \mathbb{R}^{d\times d}$ if the copula density function $c(u_1,\ldots,u_d) = |\Sigma|^{-\frac{1}{2}} \exp (-\frac{1}{2}\mathbf{x}^T(\Sigma^{-1}-\mathbf{I})\mathbf{x})$, where $\mathbf{x}:=[\Phi^{-1}(u_1),\ldots, \Phi^{-1}(u_d)]^T$ with $\Phi$ being the cumulative density function for the standard normal distribution. We state the boundedness and monotonicity under the Gaussian copula here and defer the proof to the Appendix.

\begin{proposition} \label{boundedness}
\textbf{(Boundedness)} The copula distance defined in Eq. \eqref{EqCD} is bounded when the divergence metric $\mathcal{H}$ is taken to be MMD distance, Wasserstein-2 distance or some of the $\phi$-divergence (including Jensen-Shannon distance, Hellinger distance and total variation distance).
\end{proposition}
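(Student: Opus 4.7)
The plan is to first apply the reverse triangle inequality to each summand of $CD_\mathcal{H}$ and then bound each $\mathcal{H}(P_{ij}, P_i P_j)$ uniformly using properties specific to the divergence. Since every listed $\mathcal{H}$ is non-negative, for any pair $(i,j)$ we have
\begin{equation*}
\bigl|\mathcal{H}(P^{\mathbf{X}}_{ij}, P^{\mathbf{X}}_i P^{\mathbf{X}}_j) - \mathcal{H}(P^{\mathbf{Y}}_{ij}, P^{\mathbf{Y}}_i P^{\mathbf{Y}}_j)\bigr| \leq \mathcal{H}(P^{\mathbf{X}}_{ij}, P^{\mathbf{X}}_i P^{\mathbf{X}}_j) + \mathcal{H}(P^{\mathbf{Y}}_{ij}, P^{\mathbf{Y}}_i P^{\mathbf{Y}}_j),
\end{equation*}
so it suffices to bound a single generic term $\mathcal{H}(P_{ij}, P_i P_j)$ by a finite constant. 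Because the outer sum has only $\binom{d}{2}$ terms with fixed positive weights $\beta_{ij}$, any such per-term bound yields a finite bound on $CD_\mathcal{H}(\mathbf{X},\mathbf{Y};\bm{\beta})$.

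The easy cases are the $\phi$-divergences and MMD. Each $\phi$-divergence on the list admits a universal scalar bound valid for arbitrary probability measures: $\mathcal{H}_{TV}\leq 1$, $\mathcal{H}_{H}\leq 2$, and $\mathcal{H}_{JS}\leq \log 2$. Substituting these directly closes the $\phi$-divergence case. For MMD with a bounded characteristic kernel $k$ (the Gaussian kernel, with $\sup_{x,y}|k(x,y)|\leq K$, is the canonical choice used throughout the paper), the standard expansion $\mathrm{MMD}^2(P,Q)=\mathbb{E}_P[k(X,X')]+\mathbb{E}_Q[k(Y,Y')]-2\,\mathbb{E}[k(X,Y)]$ gives $\mathrm{MMD}^2(P,Q)\leq 4K$, so MMD itself is uniformly bounded.

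The main obstacle is Wasserstein-2, which admits no universal bound in terms of the two arguments alone. The plan here is to exploit the key structural fact that $P_{ij}$ and $P_i P_j$ share identical marginals $P_i$ and $P_j$, and then use the Gaussian copula assumption to translate everything into finite second moments. Concretely, I would construct a coupling of $(X_i,X_j)\sim P_{ij}$ with $(\tilde X_i,\tilde X_j)\sim P_i\otimes P_j$ by setting $\tilde X_i = X_i$ and drawing $\tilde X_j\sim P_j$ independently of $X_i$; this yields
\begin{equation*}
W_2^2(P_{ij},P_i P_j) \leq \mathbb{E}\bigl[(X_j - \tilde X_j)^2\bigr] = 2\,\mathrm{Var}(X_j),
\end{equation*}
and a symmetric argument gives $W_2^2\leq 2\min(\mathrm{Var}(X_i),\mathrm{Var}(X_j))$. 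Under the Gaussian copula specification, the marginal variances are determined explicitly through $\Sigma$ and the marginal parameters, so this bound is finite.

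Summing all three cases across $(i,j)$ with the weights $\beta_{ij}$ yields the bound on $CD_\mathcal{H}(\mathbf{X},\mathbf{Y};\bm{\beta})$. The delicate step is unambiguously the Wasserstein-2 case: unlike the other divergences there is no off-the-shelf universal constant, and one must convert the shared-marginals/copula structure into an explicit coupling to obtain finiteness in terms of marginal second moments.
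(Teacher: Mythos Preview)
Your argument is correct but follows a genuinely different route from the paper. The paper works entirely by explicit computation under the Gaussian copula: it first shows (Proposition~4 in the appendix) that for $\phi$-divergences $\mathcal{H}_\phi(P_{ij},P_iP_j)$ depends only on the copula density, then---assuming standard normal marginals---derives closed forms such as $\mathcal{H}_W^2(P^{\mathbf X},P_1^{\mathbf X}P_2^{\mathbf X})=4-2\sqrt{2+2\sqrt{|\Sigma^{\mathbf X}|}}$ and an analogous expression for Gaussian-kernel MMD, after which boundedness follows from $|\Sigma^{\mathbf X}|\le 1$. Your approach instead invokes universal scalar bounds for $\mathcal{H}_{TV},\mathcal{H}_H,\mathcal{H}_{JS}$, the bounded-kernel expansion for MMD, and a shared-marginals coupling for $W_2$. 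The payoff of your route is that the MMD and $\phi$-divergence cases require neither the Gaussian copula nor any marginal assumption, whereas the paper's explicit formulas are tied to that parametric setting; conversely, the paper's computation yields the exact value of $\mathcal{H}(P_{ij},P_iP_j)$ as a function of $\Sigma_{ij}$, which it then reuses to prove the monotonicity proposition.

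One point to tighten in your $W_2$ step: the Gaussian copula parameter $\Sigma$ encodes only the dependence and says nothing about the marginal variances, so your sentence ``the marginal variances are determined explicitly through $\Sigma$ and the marginal parameters'' is circular. Your coupling bound $W_2^2\le 2\min(\mathrm{Var}(X_i),\mathrm{Var}(X_j))$ is finite whenever the marginals have second moments, but it is not a universal constant over all $\mathbf X$ with a given copula. The paper sidesteps this by simply \emph{assuming} standard normal marginals for the $W_2$ and MMD calculations (an assumption it states but does not justify as WLOG for those two metrics), which is what produces its uniform numerical bound. If you want a bound independent of the marginals, you would need to adopt the same normalization convention.
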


\begin{proposition} \label{monotonicity}
\textbf{(Monotonicity)} Let $\Sigma^{\mathbf{X}}$ and $\Sigma^{\mathbf{Y}}$ be the Gaussian copula parameters for the random vectors $\mathbf{X}\in \mathbb{R}^d$ and $\mathbf{Y}\in \mathbb{R}^d$, respectively. Given all of the other entries in $\Sigma^{\mathbf{X}}$ and $\Sigma^{\mathbf{Y}}$ fixed, the copula distance $CD_{\mathcal{H}}([X_i,X_j],[Y_i,Y_j])$ is monotonically increasing with $|(\Sigma_{ij}^{\mathbf{X}})^2 - (\Sigma_{ij}^{\mathbf{Y}})^2|$, $\forall~1\leq i < j \leq d$. The monotonicity is satisfied by general probability distribution divergence measures, including MMD distance, Wasserstein-2 distance, and most of the commonly-used $\phi$-divergence (including KL divergence, $\chi^2$ distance, Hellinger distance, etc.).
\end{proposition}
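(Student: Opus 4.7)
The plan is to reduce Proposition \ref{monotonicity} to a one-parameter monotonicity statement for the bivariate Gaussian copula and then verify that statement for each listed divergence. The key structural observation is that if $\mathbf{X}$ has Gaussian copula with correlation matrix $\Sigma^{\mathbf{X}}$, the bivariate sub-copula of any pair $(X_i,X_j)$ is itself Gaussian with single parameter $\rho_{\mathbf{X}}:=\Sigma^{\mathbf{X}}_{ij}$; the remaining entries of $\Sigma^{\mathbf{X}}$ do not enter into $P^{\mathbf{X}}_{ij}$ or into the product $P^{\mathbf{X}}_i P^{\mathbf{X}}_j$. With the marginals and the off-$(i,j)$ entries of $\Sigma^{\mathbf{X}}$ and $\Sigma^{\mathbf{Y}}$ held fixed, $\mathcal{H}(P^{\mathbf{X}}_{ij},P^{\mathbf{X}}_i P^{\mathbf{X}}_j)$ therefore reduces to a univariate function $g_{\mathcal{H}}(\rho_{\mathbf{X}})$, and analogously for $\mathbf{Y}$, so that
\[ CD_{\mathcal{H}}([X_i,X_j],[Y_i,Y_j]) = |g_{\mathcal{H}}(\rho_{\mathbf{X}}) - g_{\mathcal{H}}(\rho_{\mathbf{Y}})|. \]

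Next I would show that $g_{\mathcal{H}}$ is even, non-negative, vanishes at $\rho=0$, and is strictly increasing in $|\rho|$. Evenness follows from the symmetry $c(u,v;\rho)=c(u,1-v;-\rho)$ of the Gaussian copula composed with the measure-preserving map $v\mapsto 1-v$, under which each listed $\mathcal{H}$ is invariant; $g_{\mathcal{H}}(0)=0$ is just independence. Writing $\tilde g_{\mathcal{H}}$ for the induced function on $[0,1)$ with $g_{\mathcal{H}}(\rho)=\tilde g_{\mathcal{H}}(\rho^2)$, strict monotonicity of $\tilde g_{\mathcal{H}}$ gives
\[ CD_{\mathcal{H}} = |\tilde g_{\mathcal{H}}(\rho_{\mathbf{X}}^2) - \tilde g_{\mathcal{H}}(\rho_{\mathbf{Y}}^2)|, \]
which, with one of $\rho_{\mathbf{X}}^2,\rho_{\mathbf{Y}}^2$ held fixed, is strictly increasing in $|\rho_{\mathbf{X}}^2-\rho_{\mathbf{Y}}^2|$, closing the proposition.

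It remains to verify monotonicity of $\tilde g_{\mathcal{H}}$ for each $\mathcal{H}$ in the statement. For KL, the mutual information of a bivariate Gaussian copula is the classical $-\tfrac{1}{2}\log(1-\rho^2)$, manifestly increasing in $\rho^2$. For $\chi^2$ and Hellinger, substituting the bivariate Gaussian copula density reduces each integral to an elementary Gaussian integral producing closed forms in $\rho^2$ whose monotonicity is checked by differentiation. For Wasserstein-2, since $P^{\mathbf{X}}_{ij}$ and $P^{\mathbf{X}}_i P^{\mathbf{X}}_j$ share marginals and differ only through the Gaussian coupling in the latent space, pulling back through $\Phi^{-1}$ and applying the Gelbrich/Bures formula for Wasserstein-2 between bivariate Gaussians yields a closed-form expression increasing in $|\rho|$.

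The main obstacle is MMD, since MMD is not a pure functional of the copula density and its dependence on $\rho$ has no obvious closed form for a generic characteristic kernel. I would attack this by a Price-type differentiation under the integral: express $\mathrm{MMD}^2(P^{\mathbf{X}}_{ij},P^{\mathbf{X}}_i P^{\mathbf{X}}_j)$ as a four-fold expectation over independent Gaussian latents, differentiate in $\rho$ using Gaussian integration-by-parts (Stein's lemma), and sign the resulting expression. Combined with $\mathrm{MMD}^2=0$ at $\rho=0$ from independence, a non-negative derivative in $\rho^2$ then yields the required monotonicity.
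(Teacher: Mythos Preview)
Your reduction to a one-parameter problem and the treatment of the $\phi$-divergences (KL, $\chi^2$, Hellinger) match the paper's argument exactly, including the observation that strict monotonicity of $\tilde g_{\mathcal H}$ in $\rho^2$ yields the conclusion once one of $\rho_{\mathbf X}^2,\rho_{\mathbf Y}^2$ is held fixed; the paper makes this same ``fix $\Sigma^{\mathbf Y}$, vary $\rho=\Sigma^{\mathbf X}_{12}$'' reduction explicit.

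The Wasserstein-2 step, however, has a gap. ``Pulling back through $\Phi^{-1}$'' does not preserve $W_2$: the map $(u,v)\mapsto(\Phi^{-1}(u),\Phi^{-1}(v))$ is not an isometry of $\mathbb R^2$, so the Bures/Gelbrich formula for Gaussians does not in general compute $W_2(P^{\mathbf X}_{ij},P^{\mathbf X}_iP^{\mathbf X}_j)$ when the marginals are arbitrary. The paper sidesteps this by simply \emph{assuming} the marginals of $\mathbf X,\mathbf Y$ are standard normal (having shown via Proposition~\ref{prop_cd} that marginals are irrelevant for $\phi$-divergences, it carries the same working assumption to $W_2$ and MMD without separate justification), then applies the Bures formula directly to obtain $\mathcal H_W^2=4-2\sqrt{2+2\sqrt{1-\rho^2}}$ and checks monotonicity by inspection.

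For MMD the paper is likewise far more concrete than your Price/Stein sketch: it fixes the specific Gaussian kernel $k(x,y)=e^{-\|x-y\|^2}$, again takes standard normal marginals, and reduces each of the three expectations in the $\mathrm{MMD}^2$ expansion to a four-dimensional Gaussian integral with an explicit block matrix in the exponent, obtaining the closed form
\[
\mathcal H_{\mathrm{MMD}}^2=(25-16\rho^2)^{-1/2}+\tfrac15-2(25-4\rho^2)^{-1/2},
\]
whose derivative in $\rho^2$ is then signed by hand. Your proposal aims at generic characteristic kernels and generic marginals, which is strictly more than the paper claims or proves; even if the Stein differentiation goes through, you would still need to sign the resulting expression, and that is not obvious without the explicit Gaussian structure. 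The paper's narrower closed-form computation is what actually delivers the monotonicity for MMD.
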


\section{Unsupervised domain adaptation} \label{SecUDA}

\textbf{Notations.} In unsupervised domain adaptation, we are given a \textit{source} domain $\mathcal{D}_s = \{(\mathbf{x}_n^s,y_n^s)\}_{n=1}^{N_s}$ with $N_s$ labeled examples, and a \textit{target} domain $\mathcal{D}_t = \{\mathbf{x}_n^t\}_{n=1}^{N_t}$ with $N_t$ unlabeled examples. It is assumed that the two domains are characterized by different probability distributions, while they share the same feature space. In a classification task, the goal is to learn a transferable classifier to minimize the classification error on the target domain using all the given data.

Deep domain adaptation methods begin with a feature extractor that can be implemented by a neural network. The feature extractor is supposed to learn the domain-invariant feature representations from both domains. Specifically, the feature extractor learns a function $F(\mathbf{x}; \theta_f):\mathbb{R}^d \rightarrow \mathbb{R}^m$ that maps an instance to an $m$-dimensional representation with the network parameters $\theta_f$ (as illustrated in Figure \ref{Fig_network}). For simplicity, we denote the feature representation of a source instance $\mathbf{x}_n^s$ as $\mathbf{F}_n^s := F(\mathbf{x}_n^s; \theta_f)$, and that of a target instance $\mathbf{x}_n^t$ as $\mathbf{F}_n^t := F(\mathbf{x}_n^t; \theta_f)$. We define the source feature set $\mathcal{F}^s := \{\mathbf{F}_n^s\}_{n=1}^{N_s}$ and the target feature set $\mathcal{F}^t := \{\mathbf{F}_n^t\}_{n=1}^{N_t}$.

\begin{figure*}
\begin{center}
\includegraphics[height=.35\textwidth]{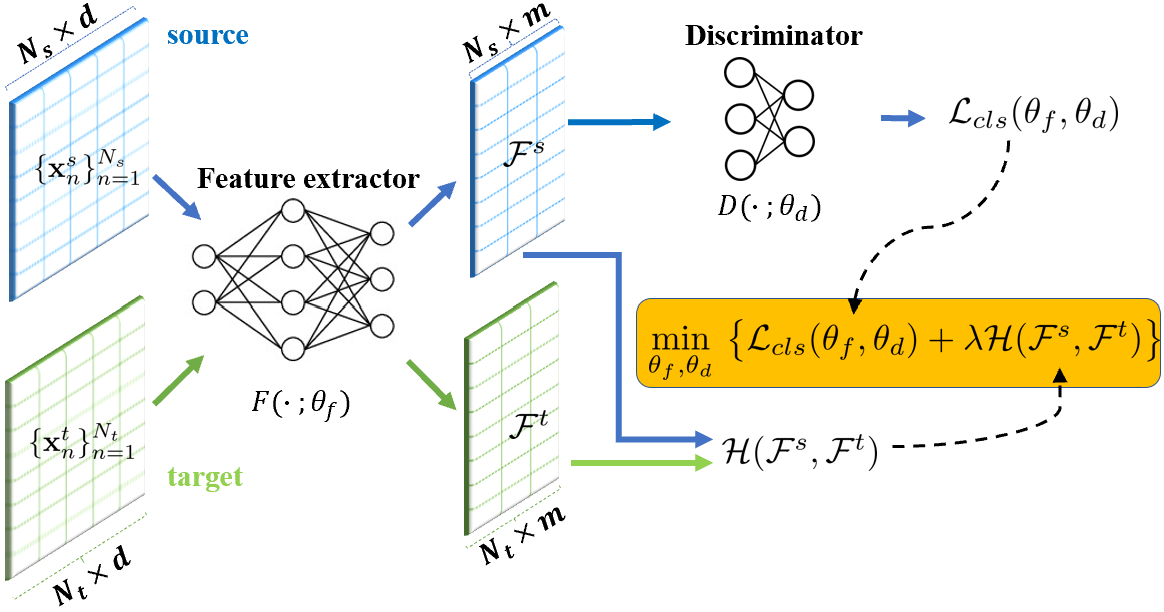}
\caption{The network flow in deep domain adaptation models. All samples, no matter from the source domain $\{\mathbf{x}_n^s\}_{n=1}^{N_s}$ (blue) or the target domain $\{\mathbf{x}_n^s\}_{n=1}^{N_t}$ (green), are fed into a feature extractor $F(\cdot;\theta_f)$ to extract features that are both discriminative and domain-invariant. The discriminative features are achieved by training a discriminator $D(\cdot;\theta_d)$ to minimize the loss function $\mathcal{L}_{cls}(\theta_f,\theta_d)$, while the domain invariance is measured by $\mathcal{H}(\mathcal{F}^s,\mathcal{F}^t)$.}
\label{Fig_network}
\end{center}
\end{figure*}

When domain adaptation is applied to a classification problem, the feature extractor is followed by a discriminator trained with samples from the source domain $\mathcal{D}_s$. Given the feature representations $\mathcal{F}^s$ computed by the feature extractor on the source domain, together with the labels $\{y_n^s\}_{n=1}^{N_s}$ ($y_n^s \in \{1,2,\ldots,l\}$), we can train a discriminator $D(\cdot ~;\theta_d):\mathbb{R}^m \rightarrow \mathbb{R}^l$ that is characterized by the parameters $\theta_d$. The discriminator loss function is defined as the cross-entropy between the predicted probabilistic distribution and the one-hot encoding of the class labels:
\begin{equation} \label{eq_src_error}
\small
\mathcal{L}_{cls}(\theta_f, \theta_d)  := -\frac{1}{N_s}\sum\limits_{n=1}\limits^{N_s} \sum\limits_{i=1}\limits^l \mathbf{1}(y_n^s = i)\cdot \log D\big(F(\mathbf{x}_n^s; \theta_f);\theta_d\big)_i,
\end{equation}
where $\mathbf{1}(\cdot)$ is the indicator function and $D(\cdot~;\theta_d)_i$ represents the $i$-th element in the predicted distribution $D(\cdot~;\theta_d)$.

Besides the discriminator that is trained to learn the discriminative features, the feature extractor is also followed by a discrepancy term that measures the difference between the source features $\mathcal{F}^s$ and the target features $\mathcal{F}^t$ to learn domain-invariant feature representations. To be specific, for a given discrepancy mesure $\mathcal{H}$, the empirical discrepancy between the source feature distribution and the target feature distribution is given by $\mathcal{H}(\mathcal{F}^s, \mathcal{F}^t)$. In literature, there have been multiple choices of the discrepancy measures $\mathcal{H}$, such as MMD distance \cite{long2015learning}, Wasserstein distance \cite{shen2018wasserstein}, JS distance \cite{goodfellow2014generative}, Proxy-$\mathcal{A}$ distance \cite{ben2007analysis}, etc. We call the latter few measures as \textit{adversarial distance} because they are implemented with a domain classifier to quantify the invariance between $\mathcal{F}^s$ and $\mathcal{F}^t$. We illustrate the commonly-chosen discrepancy measures in domain adaptation models in Figure \ref{Fig_distance}(a)-(b). 
\begin{figure*}[!htbp]
\begin{center}
\includegraphics[height=.4\textwidth]{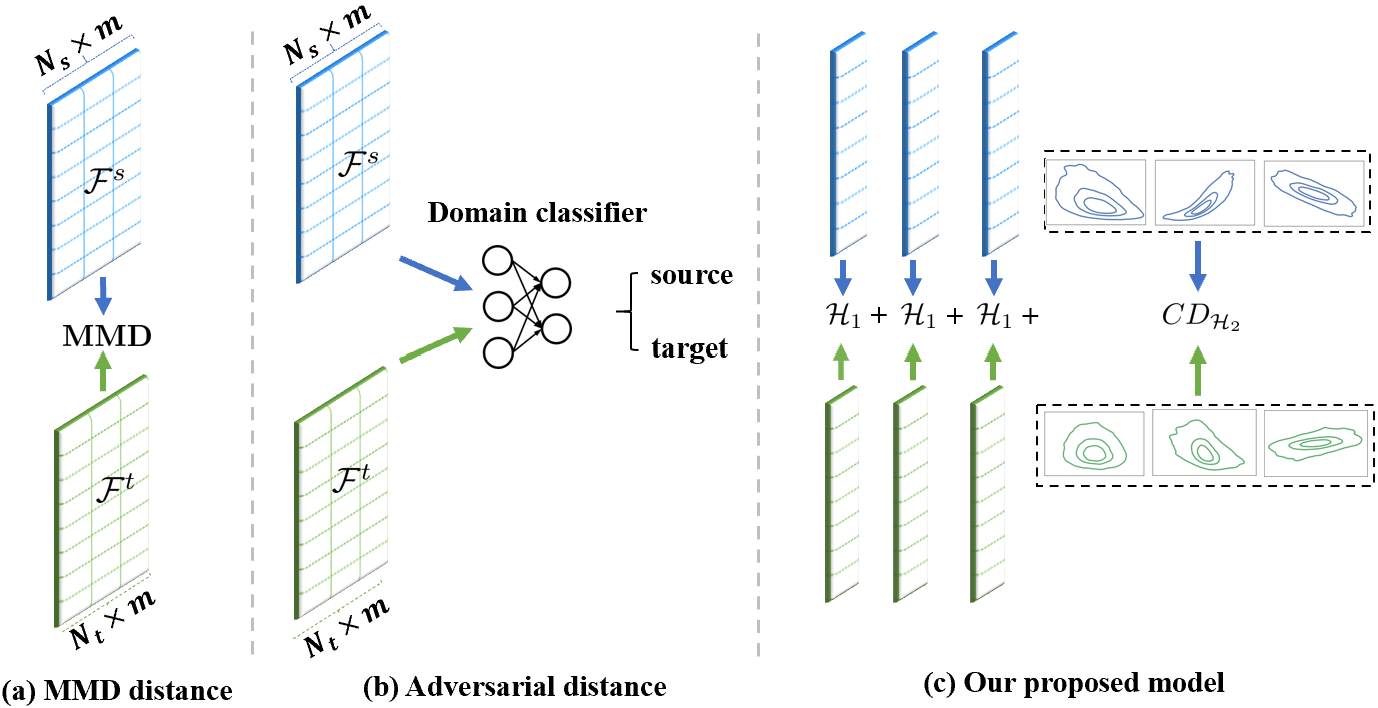}
\caption{There can be various choices of $\mathcal{H}$ to measure the feature difference in Eq. \eqref{EqDAN}, such as (a) MMD distance and (b) Wasserstein distance, JS distance, Proxy-$\mathcal{A}$ distance, etc. Our proposed model (c) looks into the detailed feature structures and regularizes the source error by dynamically adjusting the marginal divergence and the copula distance.}
\label{Fig_distance}
\end{center}
\end{figure*}

In summary, the detailed objective function to train a deep domain adaptation network is: 
\begin{equation}\label{EqDAN}
\min_{\theta_f,\theta_d}\, \big\{\mathcal{L}_{cls} (\theta_f, \theta_d)+\lambda \mathcal{H}(\mathcal{F}^s, \mathcal{F}^t) \big \},
\end{equation}
where the coefficient $\lambda$ controls the tradeoff between discriminative and transferrable feature learning.

\subsection{Copula-based domain adaptation networks}

In this work, we propose a copula-based domain adaptation network (CDAN) to allow more flexibility and emphasis on the dependence structure. To be specific, on the source domain, we split the distribution of the source features $\mathcal{F}^s \subseteq \mathbb{R}^{N_s \times m}$ into $m$ marginal distributions and the copula between the marginals. We do the same on the target domain. This facilitates us to evaluate the feature differences between the source domain and the target domain by the sum of two terms: (1) the sum of marginal feature differences $\{\mathcal{H} ( \mathcal{F}_i^s, \mathcal{F}_i^t)\}_{i=1}^m$, and (2) the copula distance between the source features and the target features $CD_{\mathcal{H}}( \mathcal{F}^s, \mathcal{F}^t;\bm{\beta})$. Here, for $* \in \{s,t\}$, $\mathcal{F}_i^*:= \{\mathbf{F}_{n,i}^*\}_{n=1}^{N_*}\subseteq \mathbb{R}^{N_* \times 1}$ with $\mathbf{F}_{n,i}^*$ being the $i$-th dimensional value of $\mathbf{F}_n^*$. In the later context, we call (1) the \textit{marginal divergence} (MD) and call (2) the \textit{copula distance} (CD). By adding the marginal divergence (with hyperparameters $\{\alpha_i\}_{i=1}^m$) and the copula distance (with hyperparameters $\bm{\beta}$) as regularization terms, we arrive at the objective function to train a CDAN, namely:
\begin{equation}\label{EqCDAN}
\min_{\theta_f,\theta_d}\, \big\{\mathcal{L}_{cls}(\theta_f, \theta_d) + \sum\limits_{i=1}\limits^m \alpha_i \mathcal{H}_1 (\mathcal{F}_i^s, \mathcal{F}_i^t)
+ CD_{\mathcal{H}_2}(\mathcal{F}^s, \mathcal{F}^t;\bm{\beta})\big \}.
\end{equation}

The detailed divergence framework is illustrated in Figure \ref{Fig_distance}(c). Notice that in Eq. \eqref{EqCDAN}, we differentiate the divergence metric $\mathcal{H}_1$ that is used to calculate the marginal divergence with $\mathcal{H}_2$ that is for the copula distance calculation. It is worth noting that our model can well accommodate the commonly-used divergence measures in literature.


Our proposed model has three advantages. On one hand, rather than encoding the divergence of each marginal feature and the dependence difference in a single value as in Eq. \eqref{EqDAN}, we split the divergence of the joint feature distributions. Thus we can identify to what extent the marginal feature differences and the copula distance contribute to the target risk respectively. Moreover, using hyperparameters $\{\alpha_i\}_{i=1}^m$ and $\bm{\beta}$ to separately control the marginal divergence and copula distance allows us to dynamically adjust the hyperparameters in a data-driven manner. It implicitly shows that there can be a tradeoff between the marginal divergence and the copula distance. Finally, different from using one distance metric to measure both the marginal feature differences and the dependence difference, our proposed model provides a more convenient and elaborate way to detect the changes of the marginal distributions and the dependence structure. 

It is straightforward to generalize our CDAN model to the regression tasks. Same as in the classification setting, a domain adaptation model for a regression task combines the feature extractor $F(\cdot;\theta_f)$ together with a regressor network $D(\cdot;\theta_d)$ to form the basis of a supervised learning network. We denote the predicted value for a sample $\mathbf{x}_n^s$ as $\widehat{y}_n^s:=D(F(\mathbf{x}_n^s;\theta_f);\theta_d)$. The regressor loss function is defined as the mean squared error between the predicted values $\{\widehat{y}_n^s\}_{n=1}^{N_s}$ and the ground-truth values $\{y_n^s\}_{n=1}^{N_s}$ on the source domain: $\mathcal{L}_{rgr} (\theta_f, \theta_d) := \sum_{n=1}^{N_s}(\widehat{y}_n^s-y_n^s)^2/N_s = \sum_{n=1}^{N_s}\big(D(F(\mathbf{x}_n^s;\theta_f);\theta_d)-y_n^s\big)^2/N_s$. Adding the marginal divergence and the copula distance as regularizer, we obtain the objective function to train a CDAN for a regression task:
\begin{equation*} 
\min_{\theta_f,\theta_d}\, \big\{\mathcal{L}_{rgr}(\theta_f, \theta_d) + \sum\limits_{i=1}\limits^m \alpha_i \mathcal{H}_1 (\mathcal{F}_i^s, \mathcal{F}_i^t) + CD_{\mathcal{H}_2}(\mathcal{F}^s, \mathcal{F}^t;\bm{\beta})\big \}.
\end{equation*}

\subsection{Algorithm}

The complete process of CDAN algorithm is presented in Algorithm 1. In particular, we provide a detailed description on how to learn the Gaussian copula parameter $\Sigma$ and how to update the model parameters $\theta_f$ and $\theta_d$.

\textbf{Learning the Gaussian copula parameter $\Sigma$.} \cite{ruppert2011statistics} proposes a moment-matching approach to learn the Gaussian copula parameter $\Sigma$ through Kendall's tau. Denote $\rho_{\tau}(X_i,X_j)$ as Kendall's tau between random variables $X_i$ and $X_j$, that is, $\rho_{\tau}(X_i,X_j):=\mathbb{E}[\text{sign}\big((X_i - \widetilde{X}_i)(X_j - \widetilde{X}_j)\big)]$, where $[\widetilde{X}_i,\widetilde{X}_j]$ is an independent copy of $[X_i,X_j]$. Then it can be proved that $\Sigma_{ij} = \sin \frac{\pi}{2} \rho_{\tau}(X_i, X_j)$. However, computing Kendall's tau by definition incurs a complexity of $O(N^2)$ when the sample size is $N$, making it expensive to train deep neural networks. Moreover, gradient vanishing occurs during training the neural network because of the \textit{sign} function. 


\IncMargin{1em}
\begin{algorithm} \SetKwData{Left}{left}\SetKwData{This}{this}\SetKwData{Up}{up} \SetKwFunction{Union}{Union}\SetKwFunction{FindCompress}{FindCompress} \SetKwInOut{Input}{Input}\SetKwInOut{Output}{Output}
	
	\Input{Source data $\mathcal{D}_s = \{(\mathbf{x}_n^s,y_n^s)\}_{n=1}^{N_s}$, Target data $\mathcal{D}_t = \{\mathbf{x}_n^t\}_{n=1}^{N_t}$, Maximum training epoch $S$, Divergence metric $\mathcal{H}_1$, $\mathcal{H}_2$, Parameters $\{\alpha_i\}_{i=1}^m$, $\bm{\beta}$.}
	\Output{Optimal model parameters $\theta_f,\theta_d$.}
	 \BlankLine 
	 
	 \For{epoch $=1$ to $S$}{ 
	 	$\mathcal{F}^s \leftarrow \{F(\mathbf{x}_n^s;\theta_f)\}_{n=1}^{N_s}$;
	 	
	 	$\mathcal{F}^t \leftarrow  \{F(\mathbf{x}_n^t;\theta_f)\}_{n=1}^{N_t}$;

	 	$MD(\theta_f) \leftarrow  \sum\limits_{i=1}\limits^m \alpha_i \mathcal{H}_1(\mathcal{F}^s_i, \mathcal{F}^t_i)$;
	 	
	 	$CD(\theta_f) \leftarrow CD_{\mathcal{H}_2}(\mathcal{F}^s,\mathcal{F}^t;\bm{\beta})$ as calculated in Eq. \eqref{EqCD};
	 	
	 	Get the source error $\mathcal{L}_{cls}(\theta_f,\theta_d)$ with Eq. \eqref{eq_src_error};
	 	
	 	$Loss \leftarrow \mathcal{L}_{cls}(\theta_f,\theta_d) + MD(\theta_f) +CD(\theta_f)$;
	 	
	 	$Loss.backward()$
	 	
 	 } 
 	 	  \caption{CDAN algorithm}
 	 	  \label{algo_disjdecomp} 
 	 \end{algorithm}
 \DecMargin{1em} 

To address the gradient vanishing issue, we propose to replace the $sign$ function with the $\tanh$ function with parameter $a$, namely, $\rho_{\tau}(X_i,X_j)\approx\rho(X_i,X_j;a):=\mathbb{E}[\tanh\big(a(X_i - \widetilde{X}_i)(X_j- \widetilde{X}_j)\big)]$. We prove that $\text{lim}_{a \to \infty} \rho(X_i,X_j;a) = \rho_\tau (X_i,X_j)$ (with the proof attached in Appendix.) 

\begin{proposition}\label{prop_Gaussian}
For two random variables $X_1$ and $X_2$, it holds that $\lim \limits_{a \to \infty} \rho(X_1, X_2;a) = \rho_\tau(X_1,X_2)$.
\end{proposition}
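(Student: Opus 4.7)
The plan is to recognize this as a routine application of the dominated convergence theorem once we set up the right random variable. Let $Z := (X_1 - \widetilde{X}_1)(X_2 - \widetilde{X}_2)$, so that by definition $\rho(X_1,X_2;a) = \mathbb{E}[\tanh(aZ)]$ and $\rho_\tau(X_1,X_2) = \mathbb{E}[\mathrm{sign}(Z)]$. The goal is then to interchange the limit in $a$ with the expectation.

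First I would establish pointwise convergence of the integrand. For any realization $Z = z$ with $z > 0$, $\tanh(az) \to 1 = \mathrm{sign}(z)$ as $a \to \infty$; similarly $\tanh(az) \to -1 = \mathrm{sign}(z)$ when $z < 0$; and at $z = 0$, $\tanh(0) = 0 = \mathrm{sign}(0)$. So $\tanh(aZ) \to \mathrm{sign}(Z)$ pointwise (in fact, everywhere on the sample space, not merely almost surely), without any need for a continuity assumption on the distributions.

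Next I would produce a dominating function. Since $|\tanh(x)| \le 1$ for all $x \in \mathbb{R}$, we have $|\tanh(aZ)| \le 1$ uniformly in $a$, and the constant function $1$ is trivially integrable under the joint law of $(X_1,\widetilde{X}_1,X_2,\widetilde{X}_2)$. The dominated convergence theorem then yields
\begin{equation*}
\lim_{a \to \infty} \rho(X_1,X_2;a) = \lim_{a \to \infty} \mathbb{E}[\tanh(aZ)] = \mathbb{E}\bigl[\lim_{a \to \infty}\tanh(aZ)\bigr] = \mathbb{E}[\mathrm{sign}(Z)] = \rho_\tau(X_1,X_2),
\end{equation*}
which is the claim.

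There is essentially no main obstacle here; the only subtlety worth flagging is the behavior at $Z = 0$, which is handled cleanly by the convention $\mathrm{sign}(0) = 0$ matching $\tanh(0) = 0$, so that pointwise convergence holds on the entire sample space and not merely off a null set. No assumption of continuity or absolute continuity of the marginals of $X_1$ or $X_2$ is required for the argument.
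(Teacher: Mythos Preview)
Your argument is correct and is actually cleaner than the paper's. Both proofs ultimately hinge on the dominated convergence theorem, but the paper applies it indirectly: it first assumes (``without loss of generality'') that $[X_1,X_2]$ has a continuous density, then splits $\mathbb{R}^4$ into a thin diagonal region $A_\delta = \{|x_1-\widetilde{x}_1|\le\delta \text{ or } |x_2-\widetilde{x}_2|\le\delta\}$ and its complement, applies DCT on the complement (where the argument of $\tanh$ has modulus at least $a\delta^2$), bounds the $A_\delta$ contribution by $2c_\delta$, and finally sends $\delta\to 0$. Your observation that $\tanh(aZ)\to\mathrm{sign}(Z)$ pointwise \emph{everywhere} (including at $Z=0$) together with the uniform bound $|\tanh|\le 1$ makes the domain splitting and the continuity assumption superfluous; a single application of DCT on the whole space suffices and yields the result in full generality for arbitrary joint laws of $(X_1,X_2)$.
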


To further reduce the computational complexity, we adopt the unbiased estimate of Kendall's tau that can be computed with linear complexity. More specifically, given $\{[x_{n,1},x_{n,2}]\}_{n=1}^N \subseteq \mathbb{R}^{N\times 2}$ as $N$ realizations of $[X_i,X_j]$, an unbiased estimator for $\rho_{\tau}(X_i,X_j)$ is $\rho_{\tau}(X_i,X_j) = \frac{2}{N} \sum_{n=1}^{N/2} \tanh \big(a(x_{2n-1,1}-x_{2n,1})(x_{2n-1,2} - x_{2n,2})\big)$. It significantly reduces the computational cost of Kendall's tau from $O(N^2)$ to $O(N)$.

\textbf{Learning the model parameters $\theta_f$ and $\theta_d$.} For illustration convenience, we conduct the calculation in the classification setting. It can be similarly done for the regression problem, thus we omit here. From Eq. \eqref{EqCDAN}, we have:
\begin{equation*}
\begin{aligned}
\bigtriangledown_{\theta_d} &= \partial_{\theta_d} \mathcal{L}_{cls}(\theta_f,\theta_d) \\
&= -  \sum \limits_{n=1}^{N_s} \sum \limits_{i=1}^l \mathbf{1}(y_n^s = i)\cdot  \partial_{\theta_d} D(\mathbf{F}_n^s;\theta_d)_i / \big(N_s D(\mathbf{F}_n^s;\theta_d)_i \big),
\end{aligned}
\end{equation*}
\begin{equation*}
\begin{aligned}
\bigtriangledown_{\theta_f} = \sum \limits_{i=1}^m \alpha_i \partial_{\theta_f} \mathcal{H}_1(\mathcal{F}_i^s, \mathcal{F}_i^t)+ \partial_{\theta_f} CD_{\mathcal{H}_2} + \partial_{\theta_f} \mathcal{L}_{cls}(\theta_f,\theta_d),
\end{aligned}
\end{equation*}
where $CD_{\mathcal{H}_2}$ is the abbreviation for $CD_{\mathcal{H}_2}(\mathcal{F}^s, \mathcal{F}^t;\bm{\beta})$.

	
	 
	 	

	 	
	 	
	 	
	 	

By the chain rule,
\begin{equation*}
\begin{aligned}
&\partial_{\theta_f} \mathcal{L}_{cls}(\theta_f,\theta_d) \\
=& -\sum\limits_{n=1}^{N_s} \sum\limits_{i=1}^l \frac{\mathbf{1}(y_n^s = i)\cdot \partial_{\mathbf{F}_n^s} D(\mathbf{F}_n^s;\theta_d)_i \cdot \partial_{\theta_f} F(\mathbf{x}_n^s; \theta_f)}{N_s D(\mathbf{F}_n^s;\theta_d)_i }.
\end{aligned}
\end{equation*}

The derivatives $\partial_{\theta_f} \mathcal{H}_1(\mathcal{F}_i^s, \mathcal{F}_i^t)$ and $\partial_{\theta_f} CD_{\mathcal{H}_2}(\mathcal{F}^s, \mathcal{F}^t;\bm{\beta})$ depend on the choice of $\mathcal{H}_1$ and $\mathcal{H}_2$. In the experiment, we will take $\mathcal{H}_1$ as the MMD distance and $\mathcal{H}_2$ as the KL divergence. Specifically, if $\mathcal{H}_1$ is MMD distance with the characteristic kernel function $k$, the unbiased estimate of squared MMD distance between $\mathcal{F}_i^s$ and $\mathcal{F}_i^t$ is given as \cite{long2015learning}: 
\begin{equation*}
\begin{aligned}
\mathcal{H}_{\text{MMD}}^2 (\mathcal{F}_i^s, \mathcal{F}_i^t) := &\sum_{n,n'=1}^{N_s} \frac{k(\mathbf{F}_{n,i}^s,\mathbf{F}_{n',i}^s)}{N_s^2} +  \sum_{n,n'=1}^{N_t} \frac{k(\mathbf{F}_{n,i}^t,\mathbf{F}_{n',i}^t)}{N_t^2}\\
&- \sum_{n=1}^{N_s}\sum_{n'=1}^{N_t} \frac{2k(\mathbf{F}_{n,i}^s,\mathbf{F}_{n',i}^t)}{N_sN_t}.
\end{aligned} 
\end{equation*}
Thus, 
\begin{equation*}
\begin{aligned}
&\partial_{\theta_f} \mathcal{H}_{\textup{MMD}}(\mathcal{F}_i^s, \mathcal{F}_i^t) \\
= &\Big[\sum\limits_{n,n'=1}^{N_s} \frac{\partial_{\theta_f} k(\mathbf{F}_{n,i}^s,\mathbf{F}_{n',i}^s)}{N_s^2}  +\sum\limits_{n,n'=1}^{N_t} \frac{\partial_{\theta_f} k(\mathbf{F}_{n,i}^t,\mathbf{F}_{n',i}^t)}{N_t^2}\\
&- \sum\limits_{n=1}^{N_s}\sum\limits_{n'=1}^{N_t} \frac{2\partial_{\theta_f} k(\mathbf{F}_{n,i}^s,\mathbf{F}_{n',i}^t)}{N_sN_t}\Big]/\big(2\mathcal{H}_{\textup{MMD}}(\mathcal{F}^s, \mathcal{F}^t)\big).
\end{aligned}
\end{equation*}
When $\mathcal{H}_2$ is the KL divergence, then

\begin{equation*}
\begin{aligned}
\partial_{\theta_f} CD_{\mathcal{H}_{KL}} 
= \sum \limits_{ i < j } \frac{\beta_{ij} \partial_{\theta_f} |\log(1-(\Sigma_{ij}^s)^2) /(1-(\Sigma_{ij}^t)^2)|}{2}, 
\end{aligned}
\end{equation*}
where $\Sigma_{ij}^s$ ($\Sigma_{ij}^t$) is the Gaussian copula parameter of $\{[\mathbf{F}^s_{n,i},\mathbf{F}^s_{n,j}]\}_{n=1}^{N_s}$ ($\{[\mathbf{F}^t_{n,i},\mathbf{F}^t_{n,j}]\}_{n=1}^{N_t}$).

\section{Experiments}\label{SecExp}

To prove the efficacy of our proposed model, we test it on one toy dataset and three real-world datasets, with the details of the three real-world datasets summarized in Table \ref{Tb_datasets}. All experiments in this section are run on Dell 7920 with Intel(R) Xeon(R) Gold 6250 CPU at 3.90GHz, and a set of NVIDIA Quadro RTX 6000 GPU. The code is available at \href{https://github.com/yzR1991/Deep_into_The_Domain_Shift}{https://github.com/yzR1991/Deep{\textunderscore}into{\textunderscore}The{\textunderscore}Domain{\textunderscore}Shift}. We run our python code on the anaconda virtual environment with Microsoft Windows Server 2019 Standard as OS. The code can also be run in other OS, device, or environment, with Pytorch version 1.7 or above.

\begin{table}[!htbp]
\footnotesize
\centering
\caption{Information of the real-world datasets used in this paper.}
\begin{tabular}{cccc} 
\toprule
\textbf{Dataset} &  $\sharp$\textbf{Instances} & $\sharp$\textbf{Features} & \textbf{Task} \\
\midrule
Retail credit data  & 1100000 & 69 & Classification \\
Equity price data  & 71242 & 22 & Regression \\
UCI wine quality  & 6197 & 12 & Regression \\
\bottomrule
\end{tabular}
\label{Tb_datasets}
\end{table}

\subsection{Toy problem: Two inter-twinning moons}

The source domain considered here is the classical binary problem with two inter-twinning moons, each class corresponding to one moon. Specifically, for the blue moon in Figure \ref{Fig_synthetic}, the points roughly falls on the upper half circle of $y = \sqrt{1-x^2}$, with points for the red moon falling on lower half circle of $y = 0.5-\sqrt{1-(1-x)^2}$. We then consider 4 different target domains by stretching the circle into ellipses where the length of the major axis can be 2, 3, 4 and 5 times that of the minor axis. Such stretching from source domain to target domain strongly affects the relationship between the vertical ordinates and the horizontal ordinates of each point, thus making changes to the internal dependence structure of the data. For each domain, we generate 1024 instances (512 of each class). 

For each transfer task, we compare the CDAN model with DAN \cite{long2015learning}, CORAL\cite{sun2016return} and the no-adaptation baseline (MLP). Each trial of the models is repeated 10 times, and we report the average accuracy in Table \ref{Tb_synthetic}. We remark that the larger the length of the major axis, the more difficult the problem becomes, as all of the four models unanimously show a weaker adaptation ability. Our CDAN provides the best performance in all of the four transfer tasks, indicating that CDAN actually captures the dependence difference precisely.

\begin{figure*}[!htbp]
\centering
 \includegraphics[width=0.24\textwidth]{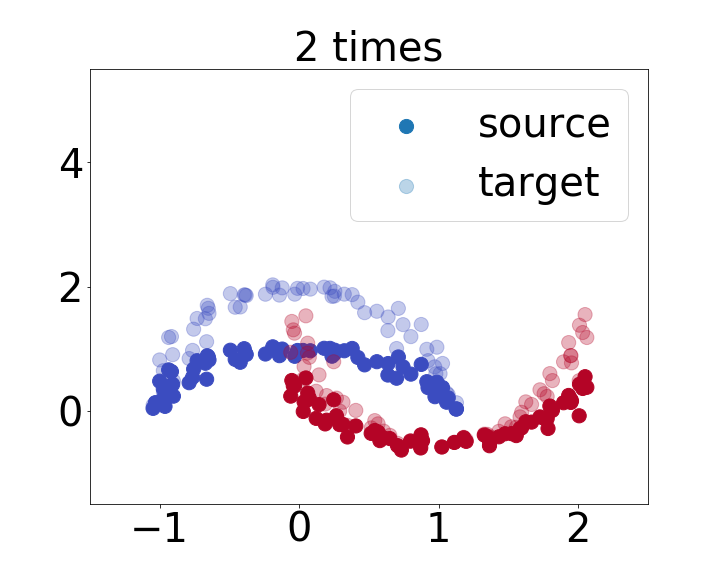}
 \includegraphics[width=0.24\textwidth]{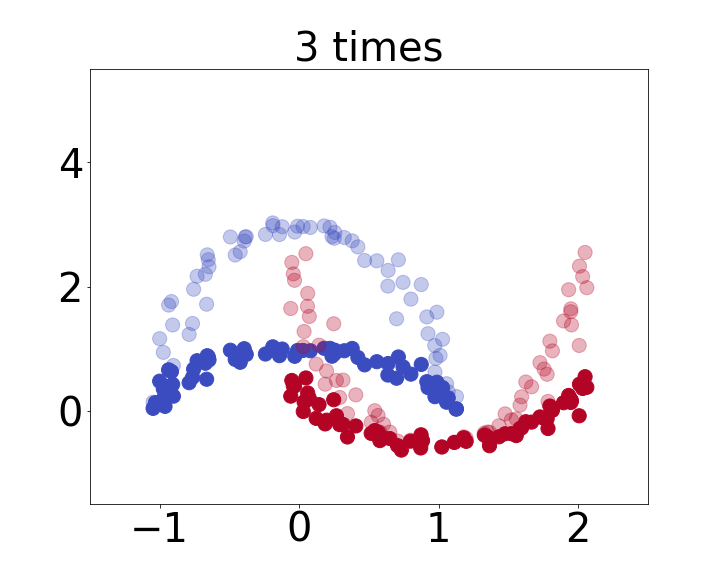}
 \includegraphics[width=0.24\textwidth]{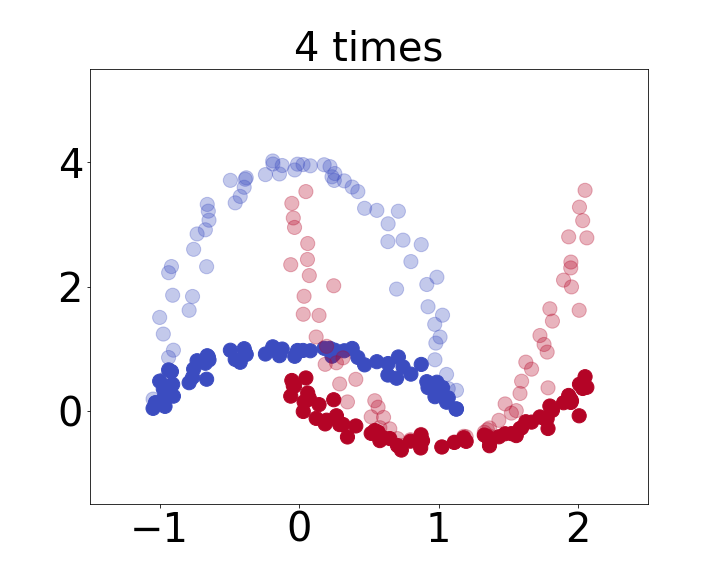}
 \includegraphics[width=0.24\textwidth]{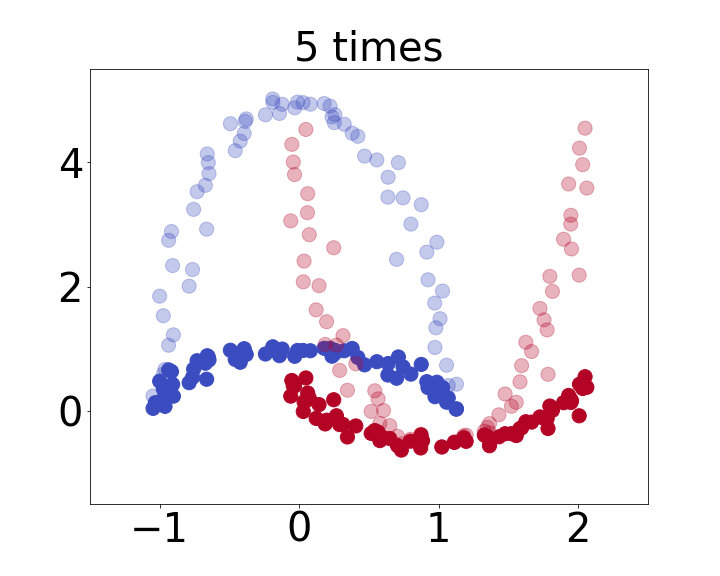}
 \caption{Illustration of the four transfer tasks on the synthetic dataset. The two
classes of the source samples are blue and red, and points that are more transparent represent the target samples.}
\label{Fig_synthetic}
\end{figure*}

\begin{table}[!htpb] 
\caption{Accuracy for the inter-twinning moons dataset.}
\label{Tb_synthetic}
\centering
\begin{tabular}{lcccc}
\toprule
  & 2 times & 3 times & 4 times & 5 times      \\
\midrule
MLP & 88.99 & 80.88 & 78.73 & 77.10 \\
CORAL & 97.45 & 94.40 & 92.32 & 90.26 \\
DAN & 97.04 & 94.17 & 91.18 & 91.04 \\
CDAN & \textbf{97.91} & \textbf{94.42} & \textbf{93.17} & \textbf{91.54}\\

\bottomrule
\end{tabular}
\end{table}

\subsection{Retail credit classification}\label{Sec_credit}

In this section, we apply our methods to improve the classification accuracy of a credit risk model on a novel real-world anonymous dataset. The dataset is kindly provided by one of the largest global technology firms that operates in both the e-commerce business and the lending business. It records the monthly credit status of roughly one-half million customers, spanning from 2016 January to 2020 June. The credit status shows whether a customer is in default. In addition to the credit status, customers' monthly shopping, purchasing, and loan history are also included in the dataset in detail. We collect 69 features for each customer in each month from the raw dataset and normalize them to the range $[0,1]$. A binary classification model is constructed to distinguish customers who default (labeled as 0) from those who have paid off all the debts on time (labeled as 1).

Domain adaptation is needed when one forecasts customers' credit risk with the classification model trained with the past data because significant distribution shifts exist between months, especially between the off-season and the peak season, between before-COVID times and after-COVID times. We record the distribution shifts between two consecutive months in Figure \ref{Fig_creditshift}. Specifically, we collect 2 features that show great importance for the classification: each customer's monthly total purchase and his monthly credit ratio (available credit amount / total credit limit). To illustrate the distribution shifts, we take the three points on 19Jun in Figure \ref{Fig_creditshift} as an example. The blue (blue-dashed, resp.) point records the MMD distance of monthly purchase (credit ratio, resp.) distribution between May and June, and the black point records the copula distance between May and June. From Figure \ref{Fig_creditshift}, we can identify three peak periods circled by a red box (19Jun, 19Nov, 20Feb) that show substantial marginal differences, verifying the necessity of transfer learning. What's more, the copula distance remains high over the whole year, suggesting that special attention to the dependence difference is in urgent need.
\begin{figure}[!htbp]
\begin{center}
\includegraphics[width=0.9\columnwidth]{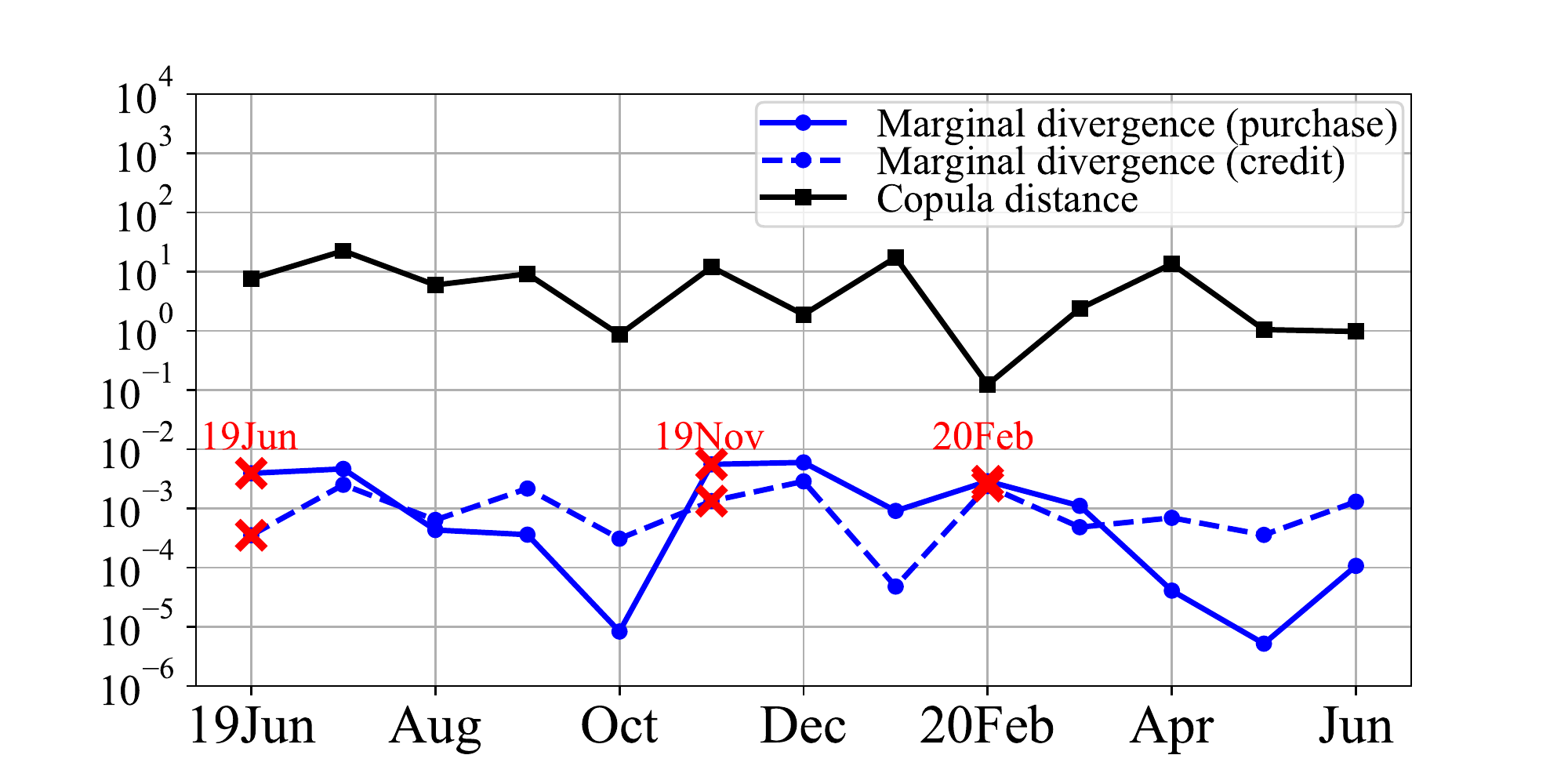}
\caption{Distribution shift in the raw data.}
\label{Fig_creditshift}
\end{center}
\end{figure}

\begin{table*}[h]
  \caption{Performance of retail credit classification in 8 transfer tasks.}
  \label{Tb_credit}
  \centering
  \footnotesize
  \begin{tabular}{l c c c c c c c c}
  \toprule
    & 19May$\rightarrow$Jun & 19Oct$\rightarrow$Nov & 19Dec$\rightarrow$20Jan & 20Jan$\rightarrow$Feb & 20Feb$\rightarrow$Mar & 20Mar$\rightarrow$Apr & 20Apr$\rightarrow$May & 20May$\rightarrow$Jun \\
  \midrule
  MLP   &       73.85 $\pm$ 0.05 &   56.11 $\pm$ 0.05 & 90.66 $\pm$ 0.005 &   75.60 $\pm$ 0.03 &  76.55 $\pm$ 0.04 &  90.78 $\pm$ 0.009 &  91.61 $\pm$ 0.005 &  88.06 $\pm$ 0.018\\
  DAN   &       79.00 $\pm$ 0.03 &    79.31 $\pm$ 0.05 &   92.09 $\pm$ 0.004 &    87.41 $\pm$ 0.01 &   91.04 $\pm$ 0.01 & 91.88 $\pm$ 0.008 &  92.78 $\pm$ 0.002 &  90.88 $\pm$ 0.009 \\
  CORAL &    77.86 $\pm$ 0.04 &   76.23 $\pm$ 0.05 &    \textbf{92.12 $\pm$ 0.003} &   87.13 $\pm$ 0.01 &   89.15 $\pm$ 0.01 &  91.94 $\pm$ 0.008 &  \textbf{92.79 $\pm$ 0.003} &  90.94 $\pm$ 0.009 \\
  AFN & 77.53 $\pm$ 0.02 & 80.01 $\pm$ 0.02 & 83.73 $\pm$ 0.02 & 82.72 $\pm$ 0.02 & 83.32 $\pm$ 0.01 & 84.52 $\pm$ 0.01 & 86.24 $\pm$ 0.01 & 84.71 $\pm$ 0.02 \\
  MCD & 75.30 $\pm$ 0.10 & 71.08 $\pm$ 0.11 & 82.60 $\pm$ 0.08 & 78.40 $\pm$ 0.09 & 76.94 $\pm$ 0.08 & 86.48 $\pm$ 0.04 & 86.00 $\pm$ 0.04 & 84.85 $\pm$ 0.05 \\
  CDAN  &       \textbf{80.44 $\pm$ 0.04} &  \textbf{80.79 $\pm$ 0.06} &       91.93 $\pm$ 0.005 &  \textbf{88.20 $\pm$ 0.02} &  \textbf{91.51 $\pm$ 0.01} &       \textbf{92.46 $\pm$ 0.010} &       92.74 $\pm$ 0.003 &       \textbf{91.39 $\pm$ 0.010} \\
  \bottomrule
  \end{tabular}
  \end{table*}
We first evaluate our methods on the transfer between the off-seasons and the peak seasons (specifically, the sales seasons June and November), and build two transfer tasks: 19May$\rightarrow$Jun, 19Oct$\rightarrow$Nov. We further investigate the COVID impact on transferring the classification models and include the evaluation on 6 more transfer tasks: 19Dec$\rightarrow$20Jan, 20Jan$\rightarrow$Feb, 20Feb$\rightarrow$Mar, 20Mar$\rightarrow$Apr, 20Apr$\rightarrow$May, 20May$\rightarrow$Jun. In each transfer task, the source domain consists of samples from the former month and the target domain samples are from the latter month. The sample size for each domain is in the magnitude of $10^5$ customers. 

We mainly follow standard evaluation protocol \cite{xiao2021implicit} for unsupervised domain adaptation and use all source samples with binary labels and all target samples without labels \cite{long2015learning}. We compare our CDAN model to 4 classical domain adaptation models: DAN \cite{long2015learning}, CORAL \cite{sun2016return}, AFN \cite{afn}, MCD \cite{mcd} as well as the no-adaptation baseline (MLP), which is a fully connected neural network with multiple hidden layers. For our model CDAN, we set $\mathcal{H}_1$ to be the MMD distance \cite{long2015learning}, and set $\mathcal{H}_2$ to be the KL divergence \cite{Belov2011}. We set $\alpha_i = \alpha\,(\forall\, i)$ and $\beta_{ij} = \beta\,(\forall\,i,j)$, and select the hyperparameter pair $(\alpha,\beta)$ by grid search. The detailed implementation procedures are summarized in the Appendix.

In Table \ref{Tb_credit}, we record the averages and standard errors of AUC over 100 randomized trials for each model in each task. CDAN outperforms the other models in 6 transfer tasks. It deserves attention that the outperformance is significant in that the increase in the AUC score is far larger than the standard deviation. As an illustration, we plot the density of the learned features' MD and CD in Figure \ref{Fig_MDCD} for each model. The density is estimated from the 100 trials for each model in a specific transfer task (20Jan$\rightarrow$Feb). We find that CDAN again shows superiority in terms of contracting both the marginal and the dependence differences. Such observation, together with the optimized hyperparameters, explains why CDAN outperforms other domain adaptation models.
\begin{figure}[!htbp]
\centering
 \includegraphics[width=0.45\columnwidth]{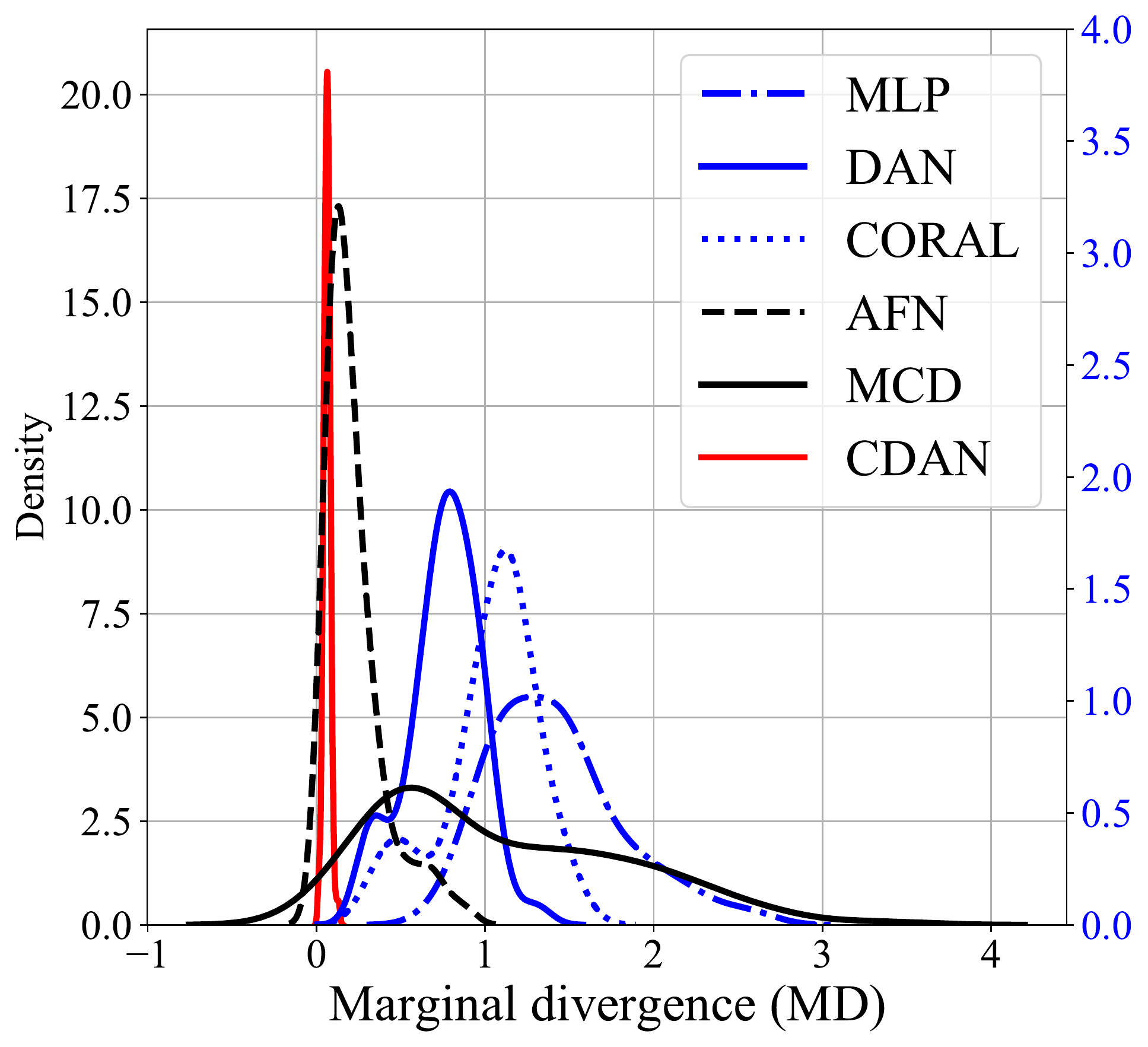}
 \includegraphics[width=0.45\columnwidth]{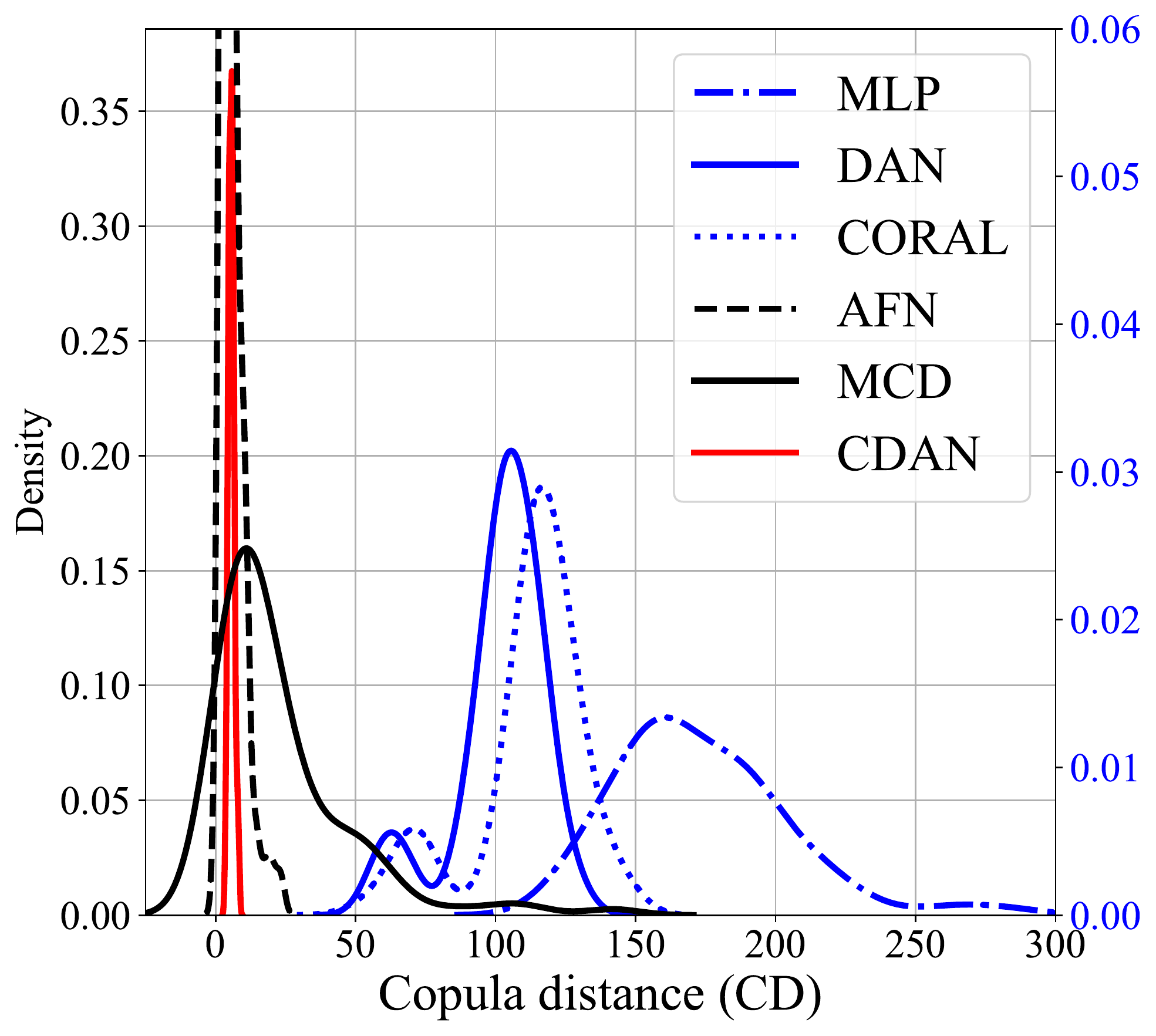}
 \caption{Distribution of the learned features' marginal divergence (left) and copula distance (right) over 100 trials for each model.}
 \label{Fig_MDCD}
\end{figure}

\subsection{Intra-day equity price regression}

We collect the intraday 5-minute asset prices of 22 stocks selected from HKEX according to the market cap and daily turnover. The 22 stocks cover 8 industries (according to the Hang Seng Industry Classification System) that include Information Technology, Financials, Consumer Discretionary, etc. The data spans from Dec 1st, 2014 to Dec 31st, 2020, and consists of 71242 observations with information of the first half-hour in each trading day excluded. We divide the observations into two domains according to the Hang Seng Index (HSI) daily returns. Specifically, the target domain includes observations of the days when the daily return is less than the 0.1-quantile of the whole 6-year return series, and the source domain includes the remaining observations. The goal is to forecast the next 5-minute price of the 22 stocks with their last-hour price as the input. 

\begin{table}[!htbp]
\small
\centering
\caption{The mean $\alpha$-quantiles $Q^s_\alpha$ ($Q^t_\alpha$) over all 22 stocks and the mean quantile dependence $\tau^s_\alpha$ ($\tau^t_\alpha$) over all stock pairs in the source (target) domain. Values in the brackets record the corresponding minimum and maximum.}
\begin{tabular}{cccc} 
\toprule
 & $\alpha=0.03$          & $\alpha=0.10$                                     & $\alpha = 0.25$                                        \\ 
\midrule
           $Q^s_\alpha$            & \begin{tabular}[c]{@{}c@{}}78.5\\(37.7, 114.8)\end{tabular}   & \begin{tabular}[c]{@{}c@{}}87.4\\(39.8, 147.6)\end{tabular}  & \begin{tabular}[c]{@{}c@{}}103.9\\(44.3, 198.3)\end{tabular}  \\ 
           $Q^t_\alpha$            & \begin{tabular}[c]{@{}c@{}}76.6\\(36.7, 131.3)\end{tabular}   & \begin{tabular}[c]{@{}c@{}}84.2\\(38.7, 149.0)\end{tabular}  & \begin{tabular}[c]{@{}c@{}}97.7\\(43.5, 158.7)\end{tabular}   \\ 
\midrule
            $\tau^s_\alpha$           & \begin{tabular}[c]{@{}c@{}}28.9\\(4.3e-2, 88.0)\end{tabular}  & \begin{tabular}[c]{@{}c@{}}36.8\\(1.6e-1, 89.6)\end{tabular} & \begin{tabular}[c]{@{}c@{}}48.5\\(8.4e-1, 98.6)\end{tabular}  \\ 
             $\tau^t_\alpha$          & \begin{tabular}[c]{@{}c@{}}42.6\\(1.7e-1, 100.0)\end{tabular} & \begin{tabular}[c]{@{}c@{}}47.0\\(2.2, 93.8)\end{tabular}    & \begin{tabular}[c]{@{}c@{}}58.4\\(2.4, 99.7)\end{tabular}     \\
\bottomrule
\end{tabular}
\label{Tb_comp2}
\end{table}

\begin{table*}[h]
  \caption{Efficiency of equity price forecast for the 6 models. Quantities are averaged over 100 runs.}
  \label{model_comparison}
  \centering
  \small
  \begin{tabular}{lccccccc}
    \toprule
             & RMSE     & RE(\%) & Q1 RE(\%) & Q2 RE(\%) & Q3 RE(\%) & LRE(\%) & SRE(\%)  \\
    \midrule
    RNN & 0.0507 $\pm$ 3.4e-3 & 3.33 $\pm$ 0.22 & 3.23 & 3.35 & 3.47 & 9.15 $\pm$ 0.90 & 1.02 $\pm$ 0.12\\
    LSTM    & 0.0446 $\pm$ 1.5e-3  & 3.04 $\pm$ 0.12 & 2.96  & 3.02 & 3.12 & 8.62 $\pm$ 0.70 & 0.94 $\pm$ 0.10\\
    DANN    & 0.0475 $\pm$ 9.6e-3  & 3.00 $\pm$ 0.56 & 2.59 & 2.90 & 3.32 & 8.44 $\pm$ 1.60  & 0.98 $\pm$ 0.22\\ 
    CORAL & 0.0449 $\pm$ 1.5e-3 & 3.06 $\pm$ 0.14   & 2.97 & 3.06  & 3.15 & 8.68 $\pm$ 0.87 &  0.96 $\pm$ 0.09 \\
    DAN     & 0.0255 $\pm$ 2.5e-3 & 1.89 $\pm$ 0.21 & 1.75 & 1.89 & 2.05 & 6.44 $\pm$ 1.55  & 0.45 $\pm$ 0.10\\
    CDAN    & \textbf{0.0235 $\pm$ 2.3e-3} & \textbf{1.77 $\pm$ 0.21} & \textbf{1.63} & \textbf{1.75} & \textbf{1.89} & \textbf{6.06 $\pm$ 1.53}  & \textbf{0.43 $\pm$ 0.09}\\
    \bottomrule
  \end{tabular}
\end{table*}
To illustrate the distribution shift of the two domains, we record the quantiles and the quantile dependence of each domain in Table \ref{Tb_comp2}. We denote $X_i^s$ ($X_i^t$) as the price series for stock $i$ in the source (target) domain and define its whole price series as $X_i := X_i^s \bigcup X_i^t$. 
Each $X_i$ is normalized by its first price. For a given quantile level $\alpha$ and $*\in\{s,t\}$ indicating the domain, the average $\alpha$-quantile of domain $*$ is defined by $Q_{\alpha}^*:=\sum_{i=1}^{22} Q_{\alpha,i}^*/22$, where $Q_{\alpha,i}^*$ is the $\alpha$-quantile of the price series$X_i^*$. The quantile dependence is given by $\tau_{\alpha}^*:=\sum_{1\leq i < j \leq 22} \tau_{\alpha,[i,j]}^*/231$, where $\tau_{\alpha,[i,j]}^*:= 100 \times Pr(X_j^* \leq Q_{\alpha,j}|X_i^* \leq Q_{\alpha,i})$ and $Q_{\alpha,i}$ is the $\alpha$-quantile of $X_i$. We see that the marginal quantiles of either domain do not differ that much, but as $\alpha$ increases, the marginals' differences between the two domains get larger. Furthermore, the quantile dependence significantly differs irrespective of the $\alpha$ value.

We compare to the following 5 models: RNN, LSTM, DANN \cite{ganin2016domain}, CORAL \cite{sun2016deep} and DAN \cite{long2015learning}. Specifically, RNN and LSTM serve as the no-adaptation benchmarks and only utilize the source samples to do the training. CORAL and DAN are the same as in Section \ref{Sec_credit}, except that they use an LSTM as a feature extractor in this section. Though AFN and MCD are SOTA models, they are not originally designed for sequential data and perform not well in this case, so we do not list the corresponding results. For more implementation details, see the Appendix.
\begin{figure}[!htbp]
\centering
 \subfloat[\footnotesize{Comparison of test RMSE (left) and RE distribution (right).}]{
  \includegraphics[width=0.45\columnwidth]{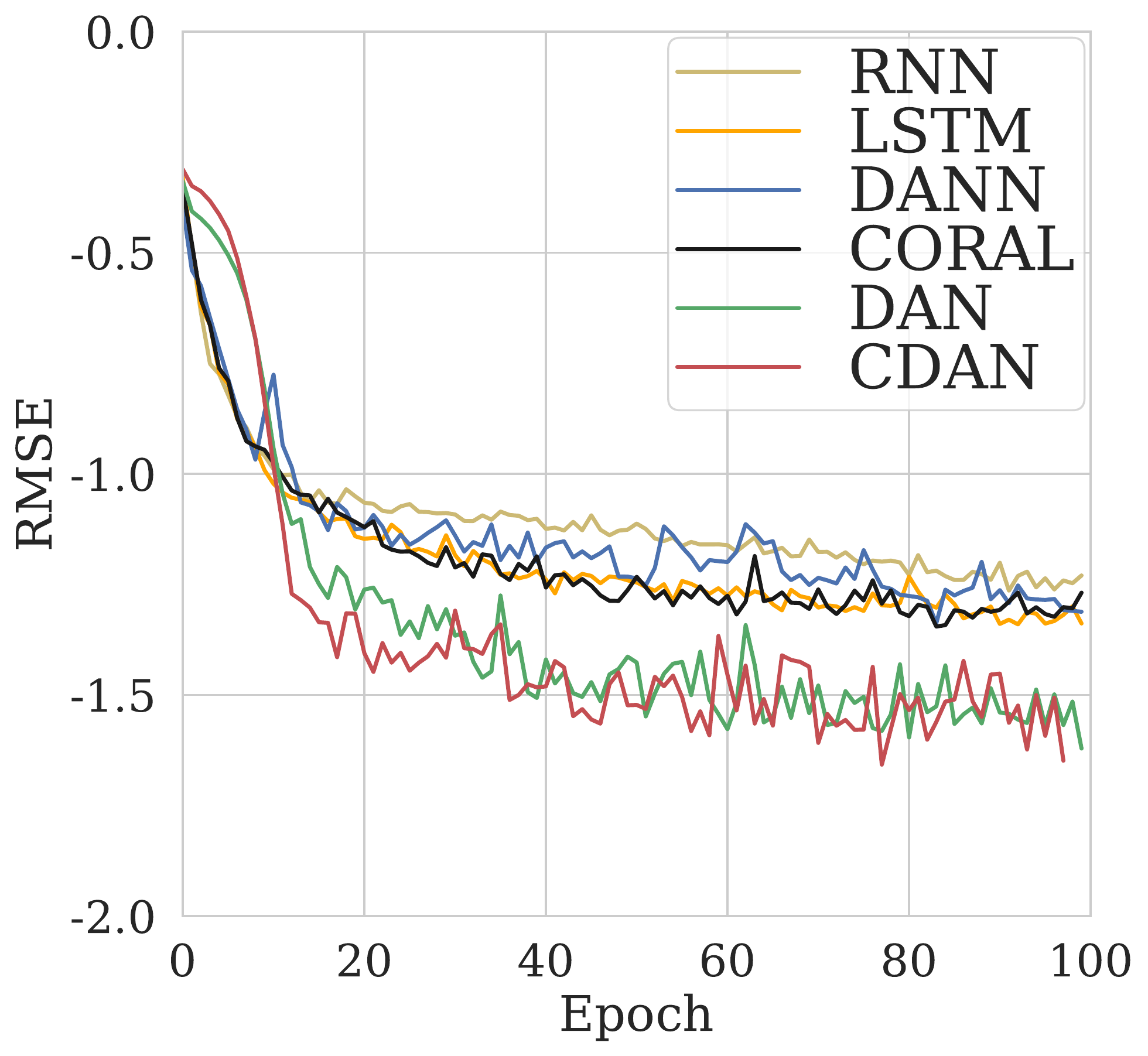}
  \includegraphics[width=0.45\columnwidth]{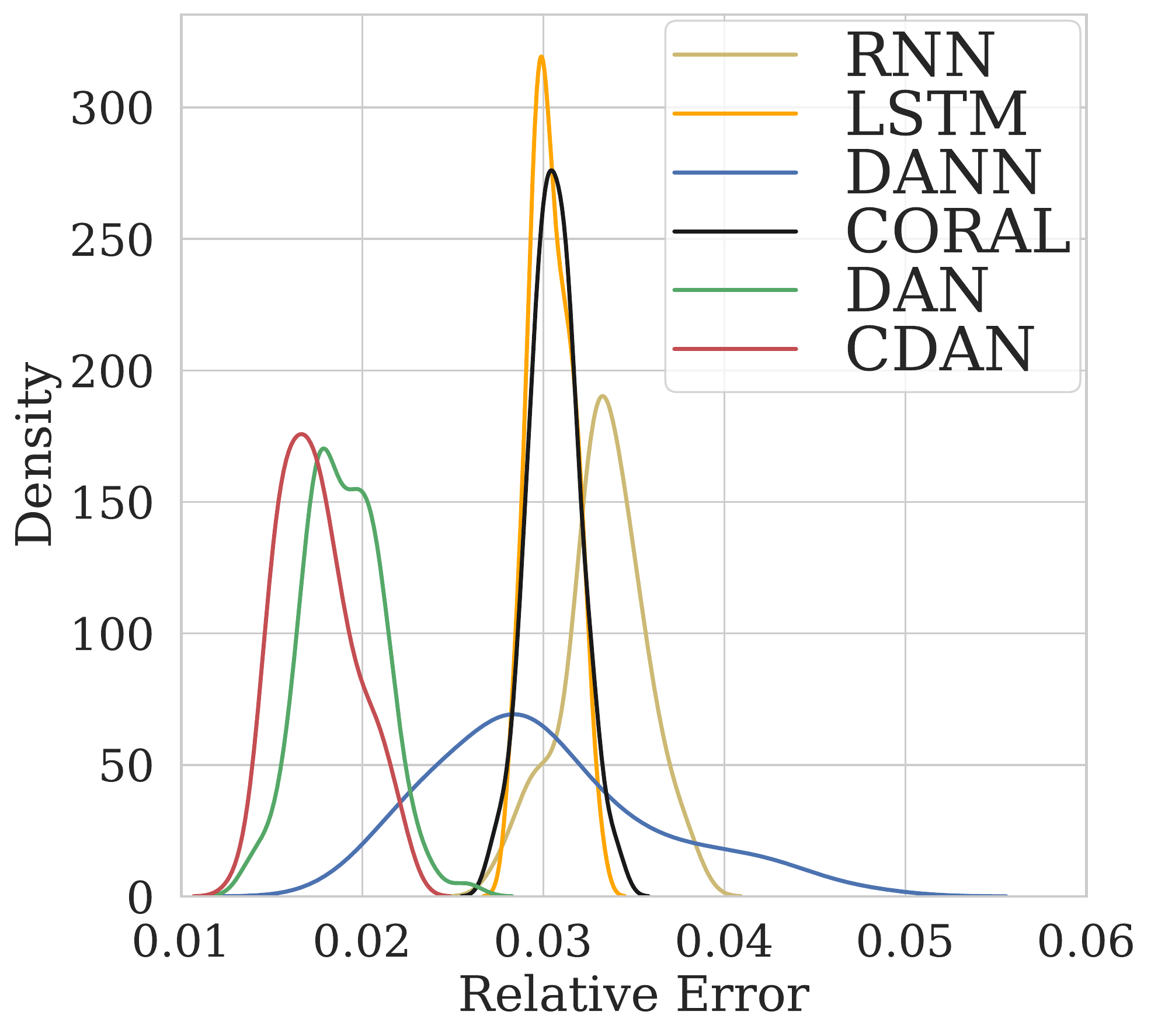}
 }
 \hfill
 \subfloat[\footnotesize{DAN model (left) v.s. CDAN model (right).}]{
  \includegraphics[width=0.45\columnwidth]{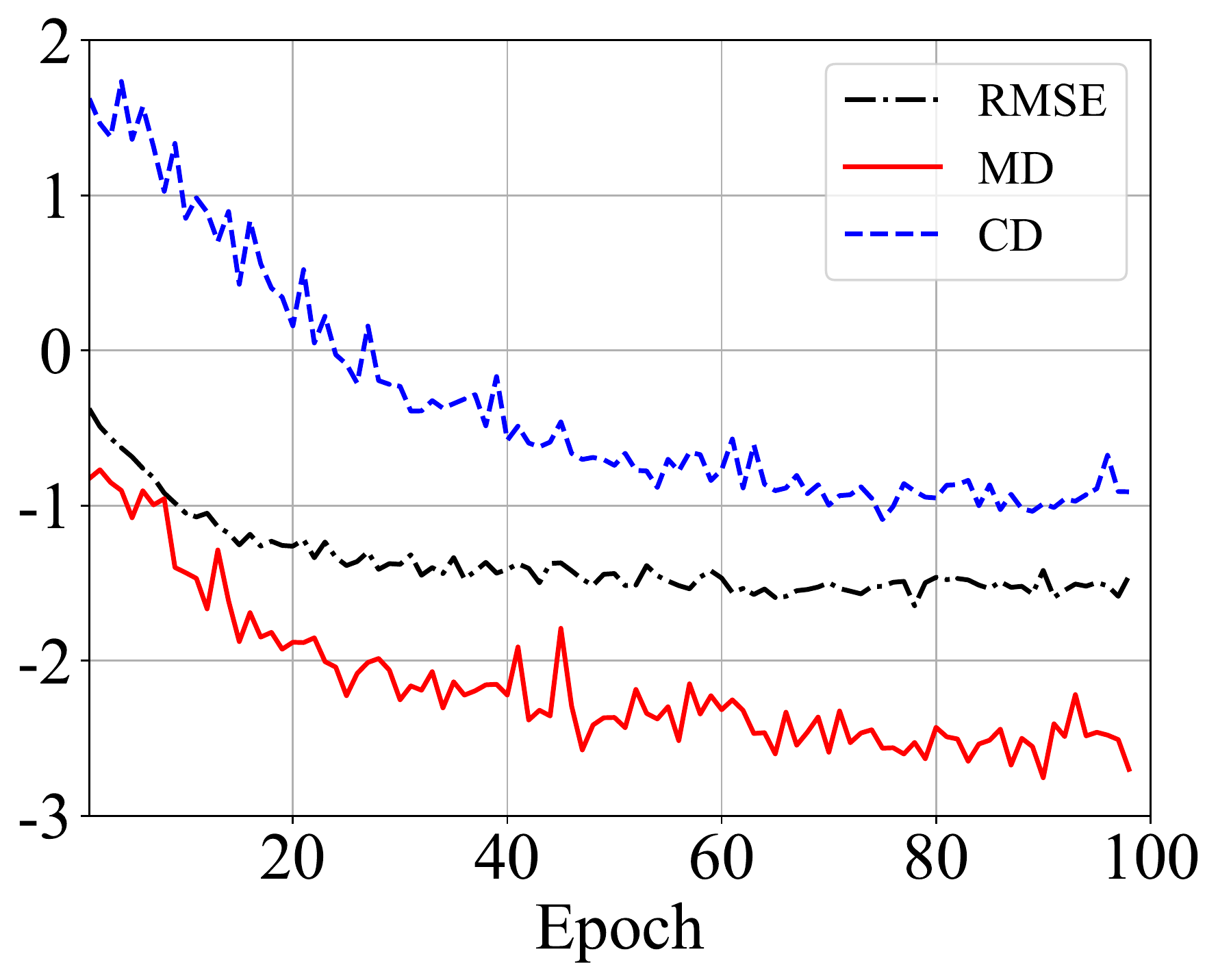}
  \includegraphics[width=0.45\columnwidth]{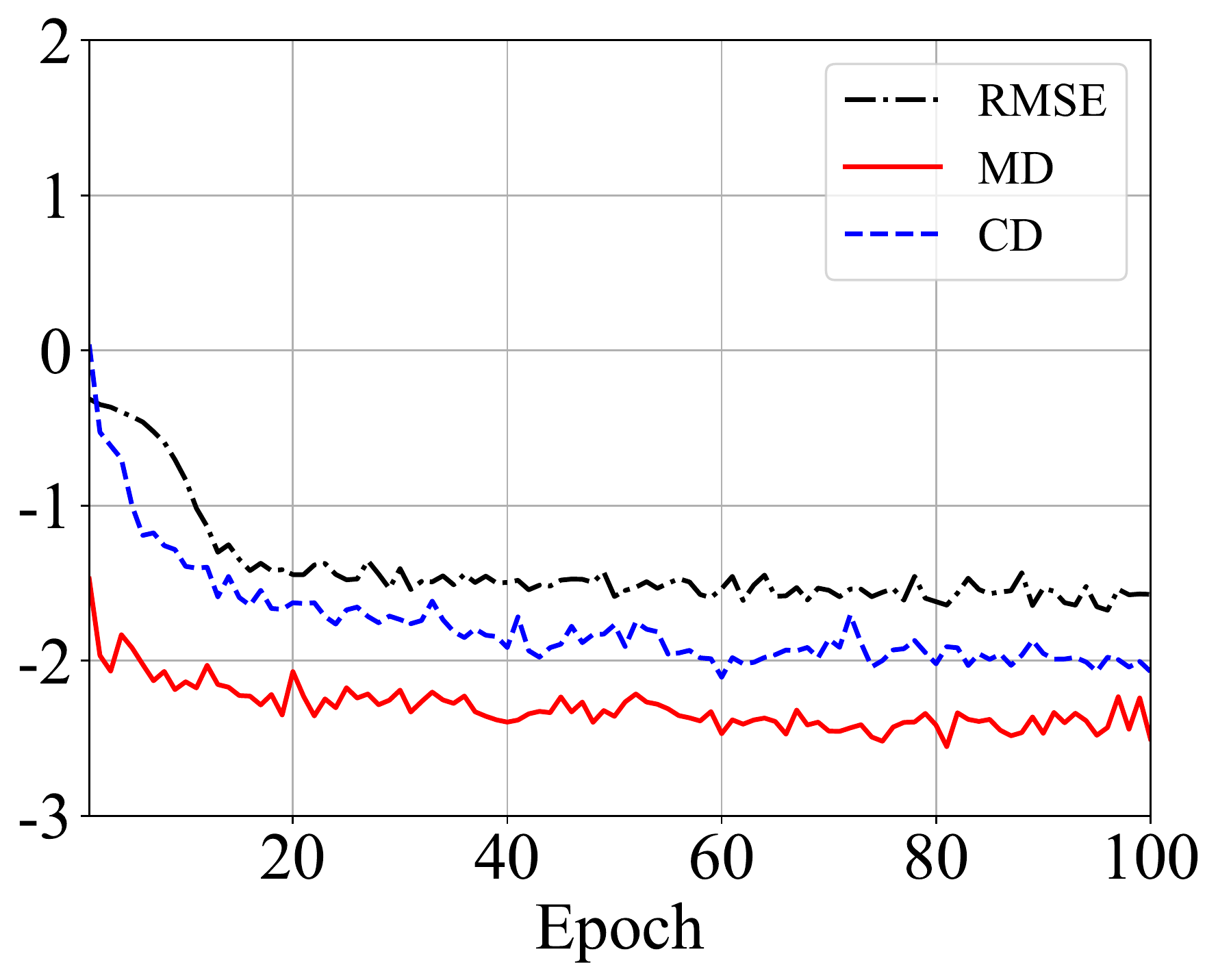}
 }
 \caption{The performance of forecasting the equity price in a particular (typical) trial. All values are in the base 10 logarithms.}
 \label{Fig_rmse_md_cd}
\end{figure}

In Table \ref{model_comparison}, we summarize the experimental results of the 6 models. There are 7 performance metric columns in Table \ref{model_comparison}. The 1st column represents the mean and the standard deviation of the RMSE over 100 trials. The 2nd-5th columns record the detailed information of the relative errors (RE) over 100 trials. Specifically, the 2nd column records the mean and the standard deviation. The 3rd-5th columns record the 0.25-quantile, 0.5-quantile and 0.75-quantile of the RE over 100 trials. In addition to the RE over 100 trials, we also record the maximal (in the 6th column LRE) and the minimal RE (in the 7th column SRE) among the 22 stocks. We find that CDAN achieves the best performance. Moreover, its Q2 RE is close to its mean RE, and the standard deviation of RE is quite small, showing that the CDAN model is pretty stable in terms of the model performance. We plot additional visualizing pictures in Figure \ref{Fig_rmse_md_cd} to have a better understanding of the results. In Figure \ref{Fig_rmse_md_cd}(a), CDAN converges fast in terms of the test RMSE, and it's efficient in controlling the relative errors. Diving deeper into the training details, from Fig \ref{Fig_rmse_md_cd}(b) we observe that the CD of CDAN decreases more significantly (to around $10^{-2}$) than that of DAN (to around $10^{-1}$). It shows that CDAN does capture the dependence difference which explains and contributes to the outperformance of CDAN.

\subsection{Wine quality regression}

The UCI wine quality dataset \cite{cortez2009modeling} contains records of red and white vinho verde wine samples from the north of Portugal, with sample size 1599 and 4598 respectively. Each record has 12 features, such as pH, alcohol and quality. The red wine and white wine samples differ in the feature distributions. Our goal is to predict the wine quality with two transfer tasks, from white wine to red wine (W$\rightarrow$R) and from red wine to white wine (R$\rightarrow$W).

We compare CDAN to 6 neural network baselines, namely MLP, AFN \cite{afn}, MCD \cite{mcd}, DANN \cite{ganin2016domain}, CORAL \cite{sun2016deep}, and DAN \cite{long2015learning}. Each neural network model has 2 hidden layers and each hidden layer has 8 units. For each model, we run 100 trials and record the RMSE, R2 scores, and relative errors. 
\begin{table}[!htbp]
\centering
\footnotesize
\caption{The RMSE, R2 score (R2) and relative error (RE) for wine quality prediction.}
\begin{tabular}{ccccccc} 
\toprule
& \multicolumn{3}{c}{W$\rightarrow$R} & \multicolumn{3}{c}{R$\rightarrow$W} \\
\cmidrule(lr){2-4} \cmidrule(lr){5-7}
            & RMSE & R2 & RE & RMSE & R2 & RE \\
\midrule
MLP & 0.125 & 0.131 & 0.110 & 0.143 &  0.067 & 0.115 \\
AFN & 0.129 & 0.087 & 0.119 & 0.145 &  0.032 & 0.118 \\
MCD & 0.125 & 0.137 & 0.109 & 0.144 &  0.042 & 0.116 \\
DANN & 0.127 & 0.104 & 0.115 & 0.147 &  0.006 & 0.119 \\
DAN & 0.122 & 0.175 & 0.109 & 0.138 &  0.123 & 0.112 \\
CORAL & 0.125 & 0.136 & 0.109 & 0.144 &  0.054 & 0.115 \\
CDAN & \textbf{0.120} & \textbf{0.201} & \textbf{0.108} & \textbf{0.133} &  \textbf{0.177} & \textbf{0.109} \\
\bottomrule
\end{tabular}
\label{Tb_comp3}
\end{table}

\begin{figure}[!htbp]
\centering
 \includegraphics[width=0.45\columnwidth]{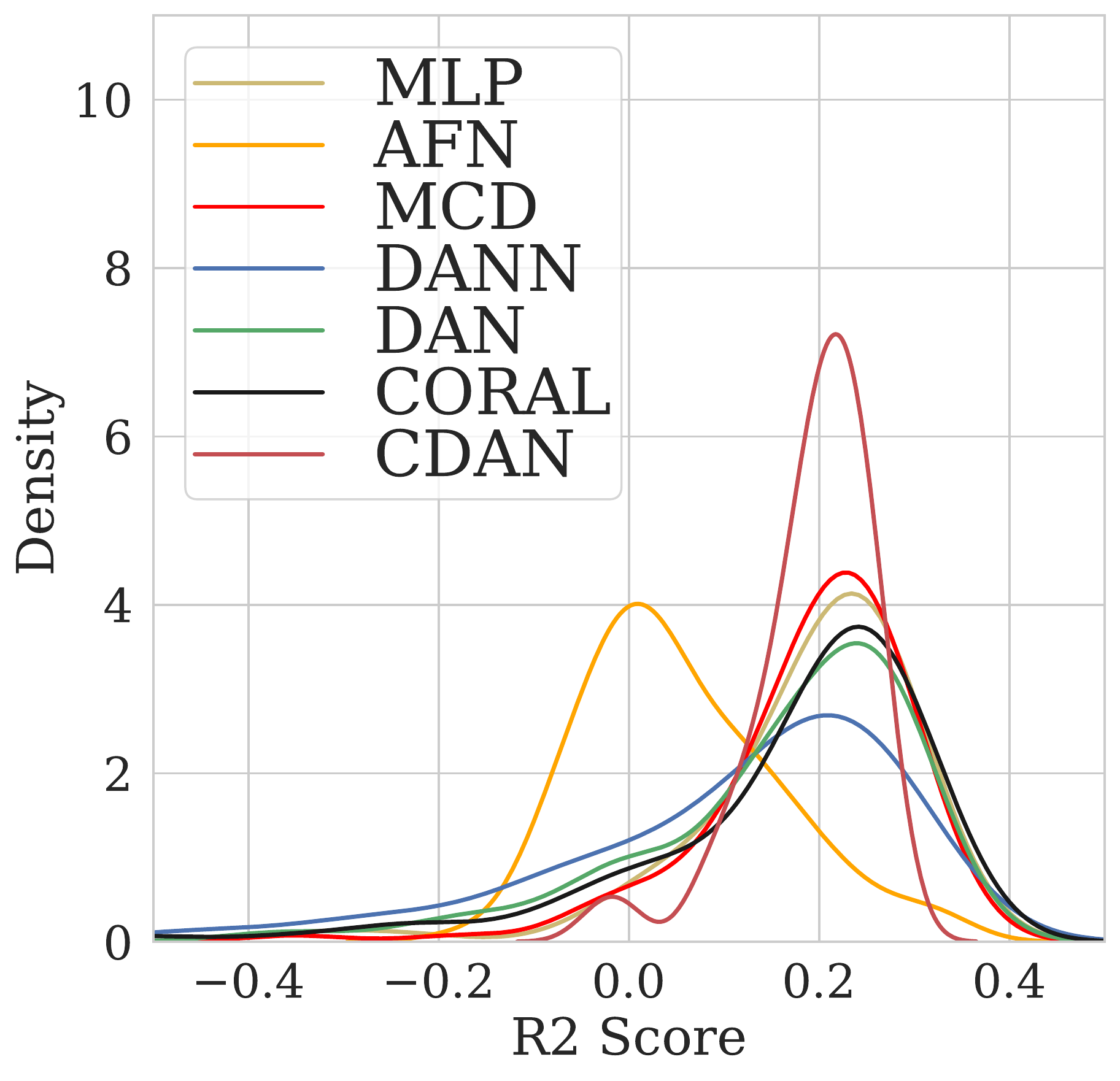}
 \includegraphics[width=0.45\columnwidth]{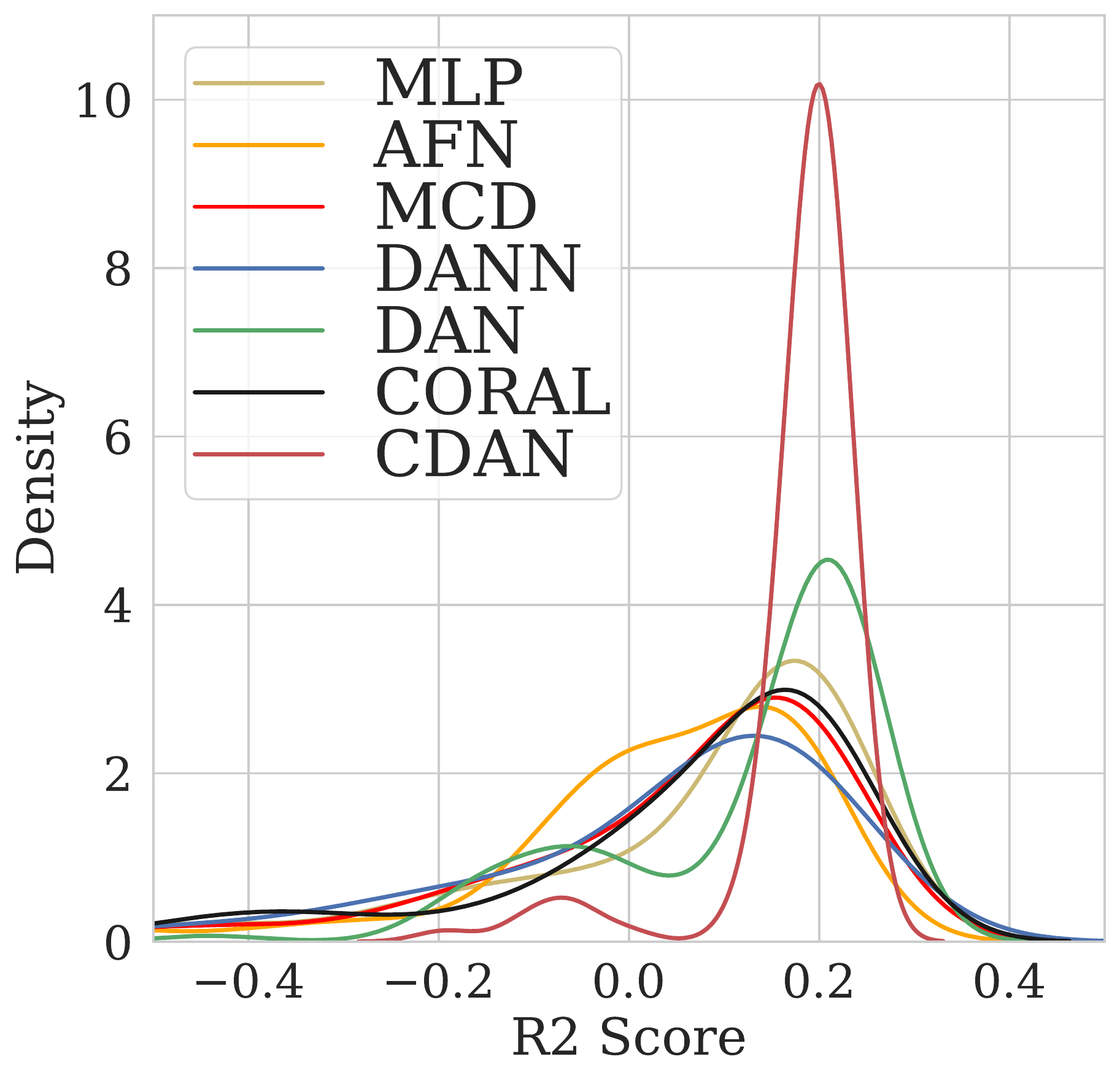}
 \caption{R2 score distributions over 100 runs for the transfer task W$\rightarrow$R (left) and R$\rightarrow$W (right).}
 \label{Fig_wine_re}
\end{figure}
\begin{table*}[htpb] 
\caption{Parameter sensitivity for the CDAN model in 8 transfer tasks on the retail credit dataset.}
\label{Tb_retail}
\centering
\footnotesize
\begin{tabular}{lllllllll}
\toprule
$(\alpha,\beta)$    &     19May $\to$ Jun         &    19Oct $\to$ Nov          &     19Dec $\to$ Jan         &     20Jan $\to$ Feb        & 20Feb $\to$ Mar             &     20Mar $\to$ Apr         &     20Apr $\to$ May          &      20May $\to$ Jun        \\
\midrule
(0.01,0.01) & 80.02$\pm$0.04 & 79.33$\pm$0.06 & \textbf{92.23$\pm$0.005} & 88.04$\pm$0.02 & 91.14$\pm$0.01 & 92.28$\pm$0.01 & 92.75$\pm$0.003 & 91.12$\pm$0.01 \\
(0.01,0.1) & 79.97$\pm$0.04 & 80.95$\pm$0.04 & 91.89$\pm$0.005 & 87.14$\pm$0.02  & 90.71$\pm$0.01  & 91.65$\pm$0.01  & 92.67$\pm$0.004 & 90.71$\pm$0.01  \\
(0.01, 1)  & 79.84$\pm$0.03 & \textbf{81.02$\pm$0.03} & 91.84$\pm$0.005 & 86.82$\pm$0.02 & 90.36$\pm$0.02 & 91.63$\pm$0.01  & 92.54$\pm$0.003 & 91.15$\pm$0.01  \\
(0.1,0.01) & 78.98$\pm$0.04 & 78.90$\pm$0.06 & 92.18$\pm$0.004 & 87.80$\pm$0.02 & 91.40$\pm$0.01  & 92.44$\pm$0.01 & \textbf{92.78$\pm$0.003} & 91.20$\pm$0.01  \\
(0.1,0.1)  & 79.79$\pm$0.03 & 79.95$\pm$0.05 & 91.85$\pm$0.006 & 86.90$\pm$0.02 & 90.94$\pm$0.01 & 91.85$\pm$0.01  & 92.68$\pm$0.003 & 90.91$\pm$0.01  \\
(0.1,1)    & 79.95$\pm$0.03 & 80.87$\pm$0.04 & 91.86$\pm$0.005 & 87.17$\pm$0.02  & 90.54$\pm$0.01 & 91.50$\pm$0.02  & 92.52$\pm$0.004 & 91.02$\pm$0.01  \\
(1,0.01)   & \textbf{80.44$\pm$0.04} & 80.79$\pm$0.06 & 91.93$\pm$0.005 & \textbf{88.20$\pm$0.02} & \textbf{91.51$\pm$0.01} & \textbf{92.46$\pm$0.01} & 92.74$\pm$0.003 & \textbf{91.39$\pm$0.01} \\
(1,0.1)    & 80.26$\pm$0.04 & 79.66$\pm$0.06 & 91.90$\pm$0.006 & 87.64$\pm$0.02 & 90.91$\pm$0.01  & 91.97$\pm$0.01  & 92.76$\pm$0.003 & 91.07$\pm$0.01 \\
\bottomrule
\end{tabular}
\end{table*}

\begin{table*}[htbp]
\caption{Parameter sensitivity for the CDAN model on the equity price dataset.}
\label{Tb_hyperparameter}
\centering
\footnotesize
\begin{tabular}{lccccccc}
\toprule
        $(\alpha,\beta)$ & RMSE     & RE(\%) & Q1 RE(\%) & Q2 RE(\%) & Q3 RE(\%) & LRE(\%) & SRE(\%)  \\
\midrule
(0.01,0.01) & \textbf{0.0235 $\pm$ 2.3e-3} & \textbf{1.77 $\pm$ 0.21}& \textbf{1.63} & \textbf{1.75} & \textbf{1.89} & \textbf{6.06 $\pm$ 1.53}& 0.43 $\pm$ 0.09 \\
(0.01, 0.1) & 0.0237 $\pm$ 1.9e-3 & 1.80 $\pm$ 0.19 & 1.66 & 1.79 & 1.92 & 6.14 $\pm$ 1.58 & \textbf{0.43 $\pm$ 0.08} \\
(0.1,0.01) & 0.0243 $\pm$ 2.6e-3 & 1.83 $\pm$ 0.26 & 1.66 & 1.79 & 1.98 & 6.52 $\pm$ 2.00 & 0.44 $\pm$ 0.10 \\
(0.1, 0.1) & 0.0243 $\pm$ 2.6e-3 & 1.84 $\pm$ 0.25 & 1.66 & 1.80 & 2.00 & 6.14 $\pm$ 1.51 & 0.44 $\pm$ 0.09 \\
(0.1, 1) & 0.0249 $\pm$ 2.5e-3 & 1.87 $\pm$ 0.26 & 1.70 & 1.86 & 2.03 & 6.40 $\pm$ 1.80 & 0.46 $\pm$ 0.09 \\
(1, 0.1) & 0.0247 $\pm$ 2.3e-3 & 1.87 $\pm$ 0.23 & 1.72 & 1.87 & 1.98 & 6.64 $\pm$ 1.86 & 0.44 $\pm$ 0.10 \\
(1, 1) & 0.0259 $\pm$ 2.4e-3 & 1.94 $\pm$ 0.23 & 1.80 & 1.95 & 2.06 & 6.68 $\pm$ 1.67 & 0.48 $\pm$ 0.09 \\
(1, 10) & 0.0306 $\pm$ 4.7e-3 & 2.18 $\pm$ 0.35  & 1.94 & 2.09 & 2.31 & 6.88 $\pm$ 2.01   & 0.59 $\pm$ 0.13   \\
(10, 1) & 0.0276 $\pm$ 3.2e-3 & 2.04 $\pm$ 0.27 & 1.83 & 2.01 & 2.20 & 6.87 $\pm$ 1.85 & 0.50 $\pm$ 0.11 \\
\bottomrule
\end{tabular}
\end{table*}

The results are summarized in Table \ref{Tb_comp3}. We conclude that the CDAN model outperforms the other benchmarks by achieving the highest R2 score, the smallest relative error and the smallest RMSE. Furthermore, we plot the R2 score distribution over the 100 runs for the two transfer tasks in Figure \ref{Fig_wine_re}. We see that the R2 score for CDAN model is more concentrated than its competitors, showing that the outperformance is quite stable.

\subsection{Parameter sensitivity and ablation study}

To further look into the sensitivity of the parameters $\alpha$ and $\beta$, we compare the model performance on the retail credit dataset and the equity price dataset. For the retail dataset, we list the model performance under various combinations of $\alpha$ and $\beta$ in the 8 transfer tasks in Table \ref{Tb_retail}. Among the best performances in each transfer task, we find that the coefficient of MD is in most cases larger than that of CD, indicating that the marginal differences and the dependence difference weigh differently in measuring the overall domain divergence. For the equity price dataset, we test 9 candidate pairs of hyperparameters $(\alpha, \beta)$ and record corresponding model performances in Table \ref{Tb_hyperparameter}. It shows that as the coefficients increase, the model performance tends to get worse. And the change in the CD parameter $\beta$ can significantly affect the model performance. It thus confirms the motivation of learning deep features by jointly adapting marginal divergence and copula distance, since a good trade-off between them could enhance feature transferability.

To evaluate the efficiency of taking MD and CD separately into the regularizer, we run experiments on the wine quality dataset. We compare the results of either $\alpha=0$ or $\beta=0$ and summarize them in Table \ref{Tb_comp4}. The ablation study shows that MD and CD are both essential in terms of a good model performance.

\begin{table}[!htbp]
\centering
\small
\caption{Ablation study on wine quality prediction.}
\begin{tabular}{ccc} 
\toprule
$(\alpha,\beta)$   & W$\rightarrow$R R2     &  R$\rightarrow$W R2     \\ 
\midrule
$(0,0)$ & 0.131 & 0.067 \\
$(0,0.1)$ & 0.145 & 0.012 \\
$(0,1)$ & 0.130 & 0.069 \\
$(0,10)$ & 0.121 & 0.070 \\
$(0.1,0)$ & 0.151 & 0.137 \\
$(1,0)$ & 0.181 & 0.158 \\
$(10,0)$ & 0.093 & 0.153\\
$(1,1)$ & \textbf{0.201} & \textbf{0.177} \\
\bottomrule
\end{tabular}
\label{Tb_comp4}
\end{table}

\begin{table}[!htbp]
\centering
\footnotesize
\caption{Comparison of different divergence measures for wine quality prediction.}
\begin{tabular}{lcccccc} 
\toprule
& \multicolumn{3}{c}{W$\rightarrow$R} & \multicolumn{3}{c}{R$\rightarrow$W} \\
\cmidrule(lr){2-4} \cmidrule(lr){5-7}
  $\mathcal{H}_1$ + $\mathcal{H}_2$          & RMSE & R2 & RE & RMSE & R2 & RE \\
\midrule
MMD + KL & \textbf{0.120} & 0.201 & \textbf{0.108} & \textbf{0.133} &  0.177 & \textbf{0.109} \\
MMD + W1 & 0.135 & 0.140 & 0.124 & 0.146 &  0.178 & 0.115 \\
MMD + $\chi^2$ & 0.135 & 0.143 & 0.124 & 0.147 &  0.176 & 0.114 \\
KL + KL & 0.128 & 0.175 & 0.112 & 0.150 &  0.036 & 0.118 \\
KL + W1 & 0.126 & \textbf{0.210} & 0.112 & 0.149 &  0.150 & 0.117 \\
KL + $\chi^2$ & 0.133 & 0.140 & 0.115 & 0.147 &  0.122 & 0.118 \\
W1 + KL & 0.124 & 0.191 & 0.110 & 0.145 &  0.112 & 0.115 \\
W1 + W1 & 0.126 & 0.208 & 0.113 & 0.144 & \textbf{0.179} & 0.114\\
W1 + $\chi^2$ & 0.125 & 0.200 & 0.110 & 0.144 & 0.151 & 0.114\\
\bottomrule
\end{tabular}
\label{Tb_divg}
\end{table}

\subsection{Comparison of different divergence measures}

As we have mentioned, there can be multiple choices over the divergence measures $\mathcal{H}_1$ and $\mathcal{H}_2$. In this section, we investigate the performance difference caused by the various divergence measures. Specifically, the candidate divergence measures for $\mathcal{H}_1$ include KL divergence, W1 (abbreviated for Wasserstein-1) distance and MMD. And the candidate divergence measures for $\mathcal{H}_2$ include KL divergence, $\chi^2$ (abbreviated for Pearson $\chi^2$) divergence and W1 distance. In Table \ref{Tb_divg}, we record the model performance of CDAN with the various combinations of $\mathcal{H}_1$ and $\mathcal{H}_2$ on the UCI wine quality dataset. From the table, we see that, the CDAN model with $\mathcal{H}_1$ taking MMD distance and $\mathcal{H}_2$ taking KL divergence performs the best in terms of RMSE and RE. Also, it should be noted that the performance difference caused by the divergence measures can be as large as that brought by different models. That reminds us to be prudent in choosing the suitable divergence measures.

\section{Conclusion}

This work proposes a new domain adaptation framework that facilitates a user to detect whether the domain difference in a transfer task comes from the marginals' differences or the dependence difference. Specifically, we quantify the dependence difference with copula distance, a difference measure endowed with boundedness and monotonicity to guarantee the algorithm convergence. By optimizing the relative weights between the marginal divergence and the copula distance, we can acquire transferability across domains in a more flexible way. Experiments on the real-world datasets demonstrate the efficacy and robustness of our approach compared to a variety of existing domain adaptation models.

\section*{Acknowledgments}

Shumin Ma acknowledges the support from: Guangdong Provincial Key Laboratory of Interdisciplinary Research and Application for Data Science, BNU-HKBU United International College (2022B1212010006), Guangdong Higher Education Upgrading Plan (2021-2025) (UIC R0400001-22) and UIC (UICR0700019-22). Qi Wu acknowledges the support from the Hong Kong Research Grants Council [General Research Fund 14206117, 11219420, and 11200219], CityU SRG-Fd fund 7005300, and the support from the CityU-JD Digits Laboratory in Financial Technology and Engineering, HK Institute of Data Science. The work described in this paper was partially supported by the InnoHK initiative, The Government of the HKSAR, and the Laboratory for AI-Powered Financial Technologies. 

{\appendices

\section*{Proof of Proposition \ref{boundedness} and \ref{monotonicity}}
\begin{proof} We begin with the analysis over the explicit form of the bivariate copula distance $CD_{\mathcal{H}}(\mathbf{X,Y})$ when $\mathcal{H}$ is taken to be different divergence measures. Note that the copula distance between multivariate distributions is defined in terms of bi-variate sub-distributions. Thus, it is enough to prove the boundedness and monotonicity of $CD_{\mathcal{H}}(\mathbf{X,Y})$ between any two bivariate random vectors $\mathbf{X}$, $\mathbf{Y} \in \mathbb{R}^2$. Suppose that the Gaussian copula parameters for $\mathbf{X}$ and $\mathbf{Y}$ are $\Sigma^\mathbf{X}$ and $\Sigma^\mathbf{Y}$, respectively. Their copula density functions are:
\begin{equation}\label{EqGauss}
\begin{aligned}
&c^\mathbf{X}(u_1,u_2) = |\Sigma^{\mathbf{X}}|^{-\frac{1}{2}} \exp\Big(-\dfrac{1}{2}\mathbf{x}^T\big((\Sigma^{\mathbf{X}})^{-1}-I\big)\mathbf{x}\Big),\\
&c^\mathbf{Y}(u_1,u_2) = |\Sigma^{\mathbf{Y}}|^{-\frac{1}{2}} \exp\Big(-\dfrac{1}{2}\mathbf{x}^T\big((\Sigma^{\mathbf{Y}})^{-1}-I\big)\mathbf{x}\Big),
\end{aligned}
\end{equation}
where $\mathbf{x} := [x_1, x_2]^T=[\Phi^{-1}(u_1),\Phi^{-1}(u_2)]^T$ with $\Phi$ being the CDF of the standard normal distribution. 

The first divergence class is $\phi$-divergence (see \cite{7552457} for the detailed descriptions of the $\phi$-divergence family). Given a convex function $\phi(x)$ such that $\phi(1)=0$, the $\phi$ divergence between two distributions $P^\mathbf{X}$ and $P^\mathbf{Y}$ is defined by $\mathcal{H}_\phi(P^\mathbf{X}, P^\mathbf{Y}) = \int \phi(\frac{dP^\mathbf{X}}{dP^\mathbf{Y}}) dP^\mathbf{Y}$. With the following proposition, we prove that the copula distance between bivariate random vectors $\mathbf{X}$ and $\mathbf{Y}$, $CD_{\mathcal{H}_{\phi}}(\mathbf{X,Y})$, can be fully characterized by the copula density functions $c^{\mathbf{X}}$ and $c^{\mathbf{Y}}$.
\begin{proposition} \label{prop_cd}
For any bivariate random vector $\mathbf{X}\in \mathbb{R}^2$, the $\phi$-divergence between the probability distribution $P^{\mathbf{X}}$ and the product of marginal distributions $P^\mathbf{X}_1 P^\mathbf{X}_2$ is, 
\begin{equation*}
\small
    \mathcal{H}_\phi(P^\mathbf{X},P^\mathbf{X}_1 P^\mathbf{X}_2) = \int_0^1 \int_0^1 \phi \big(c^\mathbf{X}(u_1,u_2)\big) du_1 du_2.
\end{equation*}
For any two bivariate random vectors $\mathbf{X}$, $\mathbf{Y} \in \mathbb{R}^2$, the copula distance between $\mathbf{X}$ and $\mathbf{Y}$ when $\mathcal{H}$ takes $\phi$-divergence is
\begin{equation*}
\begin{aligned}
\small
& CD_{\mathcal{H}_\phi}(\mathbf{X,Y}) \\
= & |\int_0^1 \int_0^1 \phi \big(c^\mathbf{X}(u_1,u_2)\big) - \phi\big(c^\mathbf{Y}(u_1,u_2)\big) du_1du_2|.
\end{aligned}
\end{equation*}
\end{proposition}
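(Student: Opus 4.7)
The plan is to reduce the $\phi$-divergence between the joint law $P^{\mathbf{X}}$ and the product of its marginals $P^{\mathbf{X}}_1 P^{\mathbf{X}}_2$ to a pure integral over the unit square involving only the copula density, via Sklar's theorem and a change of variables. Once that identity is in hand, the second claim about $CD_{\mathcal{H}_\phi}(\mathbf{X},\mathbf{Y})$ follows immediately by subtracting the corresponding expressions for $\mathbf{X}$ and $\mathbf{Y}$ and taking absolute value, using the definition of $CD_{\mathcal{H}}$ in Definition~\ref{Def_CD}.

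First, I would write out the $\phi$-divergence explicitly as
\begin{equation*}
\mathcal{H}_\phi(P^{\mathbf{X}},P^{\mathbf{X}}_1 P^{\mathbf{X}}_2) = \int_{\mathbb{R}^2} \phi\!\left(\frac{dP^{\mathbf{X}}}{d(P^{\mathbf{X}}_1 P^{\mathbf{X}}_2)}(x_1,x_2)\right) dP^{\mathbf{X}}_1(x_1)\, dP^{\mathbf{X}}_2(x_2).
\end{equation*}
Next I invoke Sklar's theorem, restated in density form in Eq.~\eqref{EqCopulaDensity}, which gives $p^{\mathbf{X}}(x_1,x_2) = c^{\mathbf{X}}(u_1,u_2)\, p^{\mathbf{X}}_1(x_1)\, p^{\mathbf{X}}_2(x_2)$ with $u_i = P^{\mathbf{X}}_i(x_i)$. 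Therefore the Radon--Nikodym derivative equals the copula density: $\tfrac{dP^{\mathbf{X}}}{d(P^{\mathbf{X}}_1 P^{\mathbf{X}}_2)}(x_1,x_2) = c^{\mathbf{X}}(P^{\mathbf{X}}_1(x_1), P^{\mathbf{X}}_2(x_2))$.

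Then I apply the probability integral transform $u_i = P^{\mathbf{X}}_i(x_i)$, which (under the continuity assumption on the marginals stated in Section~\ref{SecCD}) is a bijection from the support of $X_i$ onto $(0,1)$ with $du_i = p^{\mathbf{X}}_i(x_i)\, dx_i$. Substituting converts the outer product measure $dP^{\mathbf{X}}_1\, dP^{\mathbf{X}}_2$ into Lebesgue measure $du_1\, du_2$ on $[0,1]^2$, yielding
\begin{equation*}
\mathcal{H}_\phi(P^{\mathbf{X}},P^{\mathbf{X}}_1 P^{\mathbf{X}}_2) = \int_0^1\!\!\int_0^1 \phi\bigl(c^{\mathbf{X}}(u_1,u_2)\bigr)\, du_1\, du_2,
\end{equation*}
which is the first claim. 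The second claim then follows by writing the analogous equation for $\mathbf{Y}$, subtracting, and using linearity of the integral together with Definition~\ref{Def_CD}.

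The main obstacle, though essentially technical, is justifying the change of variables when the marginals may have points of non-strict monotonicity or when $\phi(c^{\mathbf{X}})$ is only quasi-integrable; this is handled by invoking continuity of the marginal CDFs (so that the probability integral transform yields a uniform random variable) and assuming $\phi\circ c^{\mathbf{X}}$ is integrable so that Fubini applies. A subtle point is that the Radon--Nikodym derivative is only defined $P^{\mathbf{X}}_1 P^{\mathbf{X}}_2$-almost everywhere, but since $c^{\mathbf{X}}$ is a bona fide density on $[0,1]^2$ under Sklar's theorem, this null-set ambiguity does not affect the integral. Beyond that, the argument is a direct unpacking of definitions.
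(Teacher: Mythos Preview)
Your proposal is correct and follows essentially the same route as the paper's proof: both write the $\phi$-divergence as an integral with respect to $dP^{\mathbf{X}}_1\,dP^{\mathbf{X}}_2$, identify the density ratio with the copula density via Sklar's theorem, and then apply the probability integral transform $u_i = P^{\mathbf{X}}_i(x_i)$ to land on $[0,1]^2$. Your additional remarks about continuity of the marginals and integrability are more careful than the paper, which simply performs the change of variables without comment, but the underlying argument is identical.
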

\begin{proof}
We denote the two marginal density functions for the bivariate random vector $\mathbf{X}$ as $p_1^\mathbf{X}(\cdot)$ and $p_2^\mathbf{X}(\cdot)$. By the definition of copula density function, the $\phi$-divergence between $P^\mathbf{X}$ and $P^\mathbf{X}_1 P^\mathbf{X}_2$ is, 
\begin{equation*}
\small
\begin{aligned}
& \mathcal{H}_\phi(P^\mathbf{X},P^\mathbf{X}_1 P^\mathbf{X}_2) \\
= & \int \phi\big(\frac{p_1^\mathbf{X}(x_1)p_2^\mathbf{X}(x_2)c^\mathbf{X}(P_1^\mathbf{X}(x_1),P_2^\mathbf{X}(x_2))}{p_1^\mathbf{X}(x_1)p_2^\mathbf{X}(x_2)}\big) dP_1^\mathbf{X}(x_1)dP_2^\mathbf{X}(x_2) \\
= & \int \phi\big(c^\mathbf{X}(P_1^\mathbf{X}(x_1),P_2^\mathbf{X}(x_2))\big) dP_1^\mathbf{X}(x_1)dP_2^\mathbf{X}(x_2).
\end{aligned}
\end{equation*}
With change of variables $u_1 = P_1^{\mathbf{X}}(x_1)$ and $u_2 = P_2^{\mathbf{X}}(x_2)$, we finally have $\mathcal{H}_\phi(P^\mathbf{X},P^\mathbf{X}_1 P^\mathbf{X}_2) = \int_0^1 \int_0^1 \phi(c^\mathbf{X}(u_1,u_2)) du_1 du_2$. For the bivariate random vector $\mathbf{Y}$, we similarly have $\mathcal{H}_\phi(P^\mathbf{Y},P^\mathbf{Y}_1 P^\mathbf{Y}_2) = \int_0^1 \int_0^1 \phi(c^\mathbf{Y}(u_1,u_2)) du_1 du_2$. The copula distance between $\mathbf{X}$ and $\mathbf{Y}$ is defined as the absolute difference between $\mathcal{H}_\phi(P^\mathbf{X},P^\mathbf{X}_1 P^\mathbf{X}_2)$ and $\mathcal{H}_\phi(P^\mathbf{Y},P^\mathbf{Y}_1 P^\mathbf{Y}_2)$. That completes the proof.
\end{proof}

With Proposition \ref{prop_cd}, we can directly obtain the results in Section 3 in the main paper:
\begin{itemize}
\setlength{\itemsep}{0pt}
\setlength{\parsep}{0pt}
\setlength{\parskip}{0pt}
\item[-] When $\phi(x)=x^2-1$, the resulting $\phi$-divergence is a $\chi^2$ distance. Thus, $\mathcal{H}_{\chi^2}(P_{ij},P_i P_j) = \int_0^1 \int_0^1 (c_{ij}^2(u_i,u_j)-1)\mathrm{d}u_i \mathrm{d}u_j$.
\item[-] When $\phi(x)=(\sqrt{x}-1)^2$, it corresponds to Hellinger distance. So we have $\mathcal{H}_{H}(P_{ij},P_i P_j) = \int_0^1 \int_0^1 [\sqrt{c_{ij}(u_i,u_j)}-1]^2\mathrm{d}u_i \mathrm{d}u_j$.
\item[-] When $\phi(x)=\frac{x(1-x^{-(\alpha+1)/2})}{1-\alpha^2}$, it results in $\alpha$-divergence. Thus, $\mathcal{H}_{\alpha}(P_{ij},P_i P_j) = \frac{1}{1-\alpha^2}\int_0^1 \int_0^1 [1-{c_{ij}(u_i,u_j)}^{-\frac{\alpha+1}{2}}]c_{ij}(u_i,$ $u_j)\mathrm{d}u_i \mathrm{d}u_j$.
\end{itemize}

Proposition \ref{prop_cd} states that the copula distance defined by the $\phi$-divergence is a function of the copula densities. Also, it proves that the $\phi$-divergence between the joint distribution and the product of marginals is purely a function of the copula densities. That is to say, when the divergence metric is taken to be $\phi$-divergence, the calculation of the copula distance has nothing to do with the marginal distributions. Thus, in the following proofs, when calculating the copula distance between any two random vectors $\mathbf{X}$ and $\mathbf{Y}$, we will assume the marginals are both standard normal distributions that has mean 1 and variance 0. That will greatly simplify our calculation. We can just calculate the copula distance between two bivariate Gaussian vectors with copula densities $c^{\mathbf{X}}(\mathbf{u})$ and $c^{\mathbf{Y}}(\mathbf{u})$, respectively. Furthermore, given that $\mathbf{X}$ is Gaussian with standard normal marginals, we know that their Gaussian copula parameter $\Sigma^{\mathbf{X}}$ is exactly the correlation matrix (\cite{dalla2009bayesian}). In the following proof, we will write $\Sigma^\mathbf{X} := \begin{gathered}
\begin{pmatrix} 1 & \rho \\ \rho & 1 \end{pmatrix}
\end{gathered}$, with $\rho \in [-1,1]$. Now we are ready to provide the explicit forms of the copula distance for various choices of the divergence measures.
 
\textbf{KL divergence.} When $\phi(x)=x\log x$, the corresponding $\phi$-divergence is KL divergence. By definition, we have
\begin{equation*}
\footnotesize
\mathcal{H}_{\textup{KL}}(P^\mathbf{X},P^\mathbf{X}_1 P^\mathbf{X}_2)
= \iint p^\mathbf{X}(x_1,x_2) \log \frac{p^\mathbf{X}(x_1,x_2)}{p^\mathbf{X}_1(x_1)p^\mathbf{X}_2(x_2)} dx_1dx_2.
\end{equation*}
Given that $p^\mathbf{X}(x_1,x_2) = p^\mathbf{X}(x_1) p^\mathbf{X}(x_2) c^\mathbf{X}(u_1,u_2)$, with Eq. \eqref{EqGauss}, we have
\begin{equation*}
\footnotesize
\begin{aligned}
& \mathcal{H}_{\textup{KL}}(P^\mathbf{X},P^\mathbf{X}_1 P^\mathbf{X}_2) \\
=& \iint \frac{p^{\mathbf{X}}(x_1,x_2)([x_1,x_2](\mathbf{I}-(\Sigma^\mathbf{X})^{-1})[x_1,x_2]^T-\log|\Sigma^\mathbf{X}|)}{2} dx_1 dx_2 \\
=& \frac{1}{2} \mathbb{E}_{p^{\mathbf{X}}}([x_1,x_2](\mathbf{I}-(\Sigma^\mathbf{X})^{-1})[x_1,x_2]^T-\log|\Sigma^\mathbf{X}|)  \\
=& \frac{1}{2} (-\log |\Sigma^\mathbf{X}|+2-2) \\
=& -\frac{1}{2} \log |\Sigma^\mathbf{X}|.
\end{aligned}
\end{equation*}
The third equality comes from \cite{IMM2012-03274}, where it proves that $\mathbb{E}_{p^{\mathbf{X}}}([x_1,x_2](\Sigma^\mathbf{X})^{-1}[x_1,x_2]^T) = 2.$ Finally, with the definition of the copula distance, we have:
$$CD_{\mathcal{H}_{\textup{KL}}}(\mathbf{X,Y}) = \frac{1}{2}|\log|\Sigma^\mathbf{X}| - \log|\Sigma^\mathbf{Y}||.$$

\textbf{$\chi^2$ distance.} When $\phi(x)=x^2-1$, the resulting $\phi$-divergence is a $\chi^2$ distance. By definition, we have
\begin{equation*}
\begin{aligned}
&\mathcal{H}_{\chi^2}(P^\mathbf{X},P^\mathbf{X}_1 P^\mathbf{X}_2) \\
= & \iint (\frac{p^{\mathbf{X}}(x_1,x_2)}{p^{\mathbf{X}}_1(x_1)p^{\mathbf{X}}_2(x_2)})^2 p^{\mathbf{X}}_1(x_1)p^{\mathbf{X}}_2(x_2) dx_1 dx_2 - 1.
\end{aligned}
\end{equation*}
Since $\frac{p^{\mathbf{X}}(x_1,x_2)}{p^{\mathbf{X}}_1(x_1)p^{\mathbf{X}}_2(x_2)} = c^\mathbf{X}(u_1,u_2)$ and $p^{\mathbf{X}}_1(x) = p^{\mathbf{X}}_2(x) = \frac{1}{\sqrt{2\pi}}\exp{(-\frac{x^2}{2})}$, we can further simplify the calculation as:
\begin{equation*}
\begin{aligned}
& \mathcal{H}_{\chi^2}(P^\mathbf{X},P^\mathbf{X}_1 P^\mathbf{X}_2) \\
= & \iint \frac{\exp\Big([x_1,x_2] \big(\frac{I}{2}-(\Sigma^\mathbf{X})^{-1}\big)[x_1,x_2]^T\Big)}{2\pi |\Sigma^\mathbf{X}|}  dx_1dx_2  - 1 \\
= & |\Sigma^\mathbf{X}|^{-1} - 1.
\end{aligned}
\end{equation*}
The last equality comes from the following fact: 
$2(\Sigma^\mathbf{X})^{-1}-I $ is positive definite with determinant $1$. That gives:
$\iint \exp \big([x_1,x_2]\big(\frac{I}{2}-(\Sigma^\mathbf{X})^{-1}\big)[x_1,x_2]^T \big) dx_1dx_2 = 2\pi.$ Finally, we have:$$CD_{\mathcal{H}_{\chi^2}}(\mathbf{X,Y}) = ||\Sigma^\mathbf{X}|^{-1} - |\Sigma^\mathbf{Y}|^{-1}|.$$

It is not hard to derive more results about the copula distance for other $\phi$-divergence. So we omit them here and turn to the derivation for Wasserstein-2 distance and MMD distance.

\textbf{Wasserstein-2 distance.} Assume that the marginal distributions of $\mathbf{X,Y}$ are standard normals.
By directly applying the conclusion in Proposition 7 in \cite{wasserstein2}, we have
\begin{equation*}
\mathcal{H}_{\textup{W}}^2(P^\mathbf{X},P^\mathbf{X}_1 P^\mathbf{X}_2) = 4 - 2\textup{Tr}\big((\Sigma^\mathbf{X})^{\frac{1}{2}}\big) = 4 - 2\sqrt{2+2\sqrt{|\Sigma^\mathbf{X}|}}.
\end{equation*}

Thus,
\begin{equation*}
\begin{aligned}
& CD_{\mathcal{H_{\textup{W}}}}(\mathbf{X,Y})\\ = & \Big|\sqrt{4 - 2\sqrt{2+2\sqrt{|\Sigma^\mathbf{X}|}}} - \sqrt{4 - 2\sqrt{2+2\sqrt{|\Sigma^\mathbf{Y}|}}}\Big|.
\end{aligned}
\end{equation*}

\textbf{Gaussian MMD distance.} Assume that the marginal distributions of $\mathbf{X,Y}$ are standard normals.
For ease of calculation, we take the simplest kernel function $k(\mathbf{X},\mathbf{Y})=e^{-||\mathbf{X}-\mathbf{Y}||_2^2}$. But we emphasize that, the following calculation applies to all Gaussian kernels. Using the kernel trick, the squared MMD distance can be computed as the expectation of kernel functions:
\begin{equation}\label{Eq_MMD}
\begin{aligned}
& \mathcal{H}_{\textup{MMD}}^2(P^\mathbf{X},P^\mathbf{X}_1 P^\mathbf{X}_2) \\
= & \mathbb{E}_{\mathbf{X},\mathbf{X}}k(\mathbf{X},\mathbf{X}) + \mathbb{E}_{\dot{\mathbf{X}},\dot{\mathbf{X}}}k(\dot{\mathbf{X}},\dot{\mathbf{X}}) -2\mathbb{E}_{\mathbf{X},\dot{\mathbf{X}}}k(\mathbf{X},\dot{\mathbf{X}}), 
\end{aligned}
\end{equation}
where $\dot{\mathbf{X}} \in \mathbb{R}^2$ is a random Gaussian vector with CDF $P^{\dot{\mathbf{X}}}(x_1,x_2)=P_1^{\mathbf{X}}(x_1)P_2^{\mathbf{X}}(x_2)$ and the Gaussian copula parameter $\Sigma^{\dot{\mathbf{X}}} = \begin{gathered} \begin{pmatrix} 1 & 0 \\ 0 & 1 \end{pmatrix}\end{gathered}.$

From Eq. \eqref{Eq_MMD}, we know that to calculate the squared MMD distance $\mathcal{H}_{\textup{MMD}}^2(P^\mathbf{X},P^\mathbf{X}_1 P^\mathbf{X}_2)$, we need to calculate the three expectations on the right-hand-side of this equation. We begin with the calculation of $\mathbb{E}_{\mathbf{X},\dot{\mathbf{X}}}k(\mathbf{X},\dot{\mathbf{X}})$. By definition, we have:
\begin{equation*}
\footnotesize
\begin{aligned}
& \mathbb{E}_{\mathbf{X},\dot{\mathbf{X}}}k(\mathbf{X},\dot{\mathbf{X}})\\ 
=& \iint \frac{k(\mathbf{X},\mathbf{Y}) \exp\big(-\frac{1}{2}\mathbf{X}^T(\Sigma^{\mathbf{X}})^{-1}\mathbf{X}-\frac{1}{2}\mathbf{Y}^T(\Sigma^{\dot{\mathbf{X}}})^{-1}\mathbf{Y}\big)}{4\pi^2\sqrt{|\Sigma^{\mathbf{X}}\Sigma^{\dot{\mathbf{X}}}|}}  d\mathbf{X}d\mathbf{Y} \\
=&  \iiiint \frac{\exp(-\dfrac{1}{2}[x_1,x_2,y_1,y_2]A[x_1,x_2,y_1,y_2]^T)}{4\pi^2 \sqrt{|\Sigma^\mathbf{X} \Sigma^{\dot{\mathbf{X}}}|}}
 dx_1dx_2dy_1dy_2 \\
=& \dfrac{1}{4\pi^2 \sqrt{|\Sigma^\mathbf{X} \Sigma^{\dot{\mathbf{X}}}|}} \times (2\pi)^2 |A|^{-\frac{1}{2}} \\
=& \frac{1}{\sqrt{|2\Sigma^\mathbf{X}+2\Sigma^{\dot{\mathbf{X}}}+I|}} = \frac{1}{\sqrt{21 + 4|\Sigma^\mathbf{X}|}}. 
\end{aligned}
\end{equation*}
Here, the matrix $A := \begin{gathered} \begin{pmatrix} (\Sigma^\mathbf{X})^{-1}+2I & -2I \\ -2I & (\Sigma^{\dot{\mathbf{X}}})^{-1} +2I \end{pmatrix}\end{gathered} \in \mathbb{R}^{4 \times 4}$ is positive semidefinite with determinant $\frac{|2\Sigma^\mathbf{X}+2\Sigma^{\dot{\mathbf{X}}}+I_2|}{|\Sigma^\mathbf{X}\Sigma^{\dot{\mathbf{X}}}|}$. Similarly, we have:
\begin{equation*}
\begin{aligned}
\mathbb{E}_{\mathbf{X},\mathbf{X}}k(\mathbf{X},\mathbf{X}) &= \frac{1}{\sqrt{|4\Sigma^\mathbf{X}+I|}} = \frac{1}{\sqrt{9+16|\Sigma^\mathbf{X}|}}, \\
\mathbb{E}_{\dot{\mathbf{X}},\dot{\mathbf{X}}}k(\dot{\mathbf{X}},\dot{\mathbf{X}}) &= \frac{1}{\sqrt{|4\Sigma^{\dot{\mathbf{X}}}+I|}} = \frac{1}{5}.
\end{aligned}
\end{equation*}

Organizing the three terms together, we have the squared MMD distance: 
\begin{equation*}
\mathcal{H}^2_{\textup{MMD}}(P^\mathbf{X},P^\mathbf{X}_1 P^\mathbf{X}_2) = \frac{1}{\sqrt{9+16|\Sigma^\mathbf{X}|}} + \frac{1}{5} - \frac{2}{\sqrt{21 + 4|\Sigma^\mathbf{X}|}},
\end{equation*}
and the copula distance
\begin{equation*}
\small
\begin{aligned}
    CD_{\mathcal{H}_{\textup{MMD}}}(\mathbf{X},\mathbf{Y}) &=  \Big|\sqrt{\frac{1}{\sqrt{9+16|\Sigma^\mathbf{X}|}}+ \frac{1}{5} -\frac{2}{\sqrt{21+4|\Sigma^\mathbf{X}|}}} \\
    & - \sqrt{\frac{1}{\sqrt{9+16|\Sigma^\mathbf{Y}|}}+ \frac{1}{5} -\frac{2}{\sqrt{21+4|\Sigma^\mathbf{Y}|}}}\Big|.
\end{aligned}
\end{equation*}

\textbf{Boundedness.} Given that $\Sigma^\mathbf{X}$ and $\Sigma^\mathbf{Y}$ for Gaussian random vectors are in essence the correlation matrix, we know that $|\Sigma^\mathbf{X}|\leq 1$ and $|\Sigma^\mathbf{Y}|\leq 1$. Thus, it is easy to verify that when $\mathcal{H}$ is Wasserstein-2 distance or Gaussian MMD distance, the copula distance is bounded. Furthermore, we know that the divergence measures (including the total variation distance, Hellinger distance, Jensen-Shannon divergence, etc.) are bounded by definition. Consequently, the corresponding copula distance is bounded.

\textbf{Monotonicity.} We fix the Gaussian copula parameter $\Sigma^\mathbf{Y}$ and express $CD_\mathcal{H}(\mathbf{X,Y})$ as a function of $\Sigma_{12}^\mathbf{X} = \rho$. A simple observation is that, if a function $f(x)$ is monotonically increasing with respect to $x$, then given $y$ fixed, $|f(x)-f(y)|$ is monotonically increasing with respect to $|x-y|$. So if $\mathcal{H}(P^\mathbf{X},P^\mathbf{X}_1 P^\mathbf{X}_2)$ is increasing with respect to $\rho^2$, we can conclude that the corresponding copula distance is monotonically increasing with $|(\Sigma_{12}^\mathbf{X})^2-(\Sigma_{12}^\mathbf{Y})^2|$. We check them one by one. 
\begin{itemize}
\setlength{\itemsep}{0pt}
\setlength{\parsep}{0pt}
\setlength{\parskip}{0pt}
\item[-] $\mathcal{H}_{\text{KL}}(P_{12},P_1 P_2) = -\frac{1}{2}(1-\rho^2)$ monotonically increases with $\rho^2$.
\item[-] $\mathcal{H}_{\chi^2}(P_{12},P_1 P_2) = \frac{1}{1-\rho^2}-1$ monotonically increases with $\rho^2$.
\item[-] $\mathcal{H}_{\textup{W}}^2(P^\mathbf{X},P^\mathbf{X}_1 P^\mathbf{X}_2) = 4-2\sqrt{2+2\sqrt{1-\rho^2}}$ monotonically increases with $\rho^2$.
\item[-] $\mathcal{H}^2_{\textup{MMD}}(P^\mathbf{X},P^\mathbf{X}_1 P^\mathbf{X}_2) = \frac{1}{\sqrt{25-16\rho^2}} + \frac{1}{5} - \frac{2}{\sqrt{25- 4\rho^2}}$.
\end{itemize}

From the above calculations, we conclude that $\mathcal{H}(P^\mathbf{X},P^\mathbf{X}_1 P^\mathbf{X}_2)$ is increasing with respect to $\rho^2$ when $\mathcal{H}$ is KL divergence, $\chi^2$ distance and Wasserstein-2 distance. For Gaussian MMD distance, we consider the function $f(x) = \frac{1}{\sqrt{25-16x}} - \frac{2}{\sqrt{25-4x}}, x \in [0,1]$. The first derivative $f'(x)=8(25-16x)^{-3/2} - 4(25-4x)^{-3/2} > 0$, suggesting that $\mathcal{H}_{\textup{MMD}}(P^\mathbf{X},P^\mathbf{X}_1 P^\mathbf{X}_2)$ is increasing with respect to $\rho^2$.
\end{proof}

\section*{Proof of Proposition \ref{prop_Gaussian}}
\begin{proof}

Without loss of generality, we assume that the probability density function $p(x_1,x_2)$ of $[X_1,X_2]$ is continuous. Consider the area $A_\delta=\{(x_1,x_2,\widetilde{x}_1,\widetilde{x}_2): |x_1-\widetilde{x}_1| \leq \delta \textup{ or } |x_2-\widetilde{x}_2| \leq \delta\} \subset \mathbb{R}^4$. Define $c_{\delta}:= \iiiint_{A_\delta} p(x_1,x_2)p(\widetilde{x}_1,\widetilde{x}_2)dx_1dx_2d\widetilde{x}_1d\widetilde{x}_2$. It always holds that, 
\begin{equation*}
\begin{aligned}
& \rho(X_1,X_2;a) = \mathbb{E}[\tanh\big(a(X_1 - \widetilde{X}_1)(X_2- \widetilde{X}_2)\big)]\\
=& (\iiiint_{A_\delta} + \iiiint_{\mathbb{R}^4-A_\delta}) \tanh\big(a(x_1-\widetilde{x}_1)(x_2-\widetilde{x}_2)\big) \\ 
&\times p(x_1,x_2)p(\widetilde{x}_1,\widetilde{x}_2) d\widetilde{x}_2d\widetilde{x}_1dx_2dx_1.
\end{aligned}
\end{equation*}

Outside $A_\delta$, we have $|a(x_1-\widetilde{x}_1)(x_2-\widetilde{x}_2)| \geq a\delta^2$, and 
\begin{equation*}
\footnotesize
\begin{aligned}
&\lim \limits_{a\to\infty} \iiiint_{\mathbb{R}^4-A_\delta} \tanh\big(a(x_1-\widetilde{x}_1)(x_2-\widetilde{x}_2)\big)\\
&\times p(x_1,x_2)p(\widetilde{x}_1,\widetilde{x}_2) dx_1dx_2d\widetilde{x}_1d\widetilde{x}_2 \\
=& \lim \limits_{a\to\infty} \iiiint_{\mathbb{R}^4-A_\delta} \textup{sign}\big(a(x_1-\widetilde{x}_1)(x_2-\widetilde{x}_2)\big) \\
&\times p(x_1,x_2)p(\widetilde{x}_1,\widetilde{x}_2) dx_1dx_2d\widetilde{x}_1d\widetilde{x}_2 \quad \text{(dominated convergence theorem)} \\
=& \rho_\tau(X_1,X_2) - \lim \limits_{a\to\infty}\iiiint_{A_\delta} \textup{sign}\big(a(x_1-\widetilde{x}_1)(x_2-\widetilde{x}_2)\big)\\
&\times p(x_1,x_2)p(\widetilde{x}_1,\widetilde{x}_2) dx_1dx_2d\widetilde{x}_1d\widetilde{x}_2.
\end{aligned}
\end{equation*}

Notice that inside $A_{\delta}$, $|\tanh(x)| \leq 1$. Thus,  $\forall \delta$,
\begin{equation*}
\footnotesize
\begin{aligned}
&|\lim \limits_{a\to\infty} \rho(X_1,X_2;a)-\rho_\tau(X_1,X_2)| \\
\leq & \lim \limits_{a\to\infty}\iiiint_{A_\delta} \Big( |\textup{sign}\big(a(x_1-\widetilde{x}_1)(x_2-\widetilde{x}_2)\big)| + \\ &|\tanh\big(a(x_1-\widetilde{x}_1)(x_2-\widetilde{x}_2)\big)|\Big)p(x_1,x_2)p(\widetilde{x}_1,\widetilde{x}_2) dx_1dx_2d\widetilde{x}_1d\widetilde{x}_2 \\
\leq & 2\iiiint_{A_\delta} p(x_1,x_2)p(\widetilde{x}_1,\widetilde{x}_2) dx_1dx_2d\widetilde{x}_1d\widetilde{x}_2 = 2 c_{\delta}.
\end{aligned}
\end{equation*}


Indeed, it is easy to verify that $c_0=0$, $c_\delta$ is finite and is continuous with respect to $\delta$.
Thus, we let $\delta$ approach zero and get the desired result that $\textup{lim}_{a \to \infty} \rho(X_1,X_2;a) = \rho_\tau(X_1,X_2).$
\end{proof}}

\section*{Implementation Details}

All experimental models were trained using the Adam optimizer implemented by Pytorch with initial learning rate 0.01. All activation functions were taken to be the ReLU. 

\subsection{Toy problem}

The MLP model is a simple neural network with 2 hidden layers of 8 and 4 neurons respectively.

In the DAN model, the feature extractor is a neural network with 2 hidden layers of 8 and 4 neurons respectively. The domain discrepancy is evaluated by the MMD distance and the discriminator is the final output layer.

The CORAL model has the same architecture with the DAN model, except for that the domain discrepancy is measured by the Frobenius norm.

The CDAN model differs with the CORAL model only in the measurement of the domain discrepancy.

\subsection{Retail credit classification}

The MLP model is a simple neural network with 2 hidden layers of 128 and 64 neurons respectively.

In the DAN model, the feature extractor is a neural network with 3 hidden layers of 64, 32 and 16 neurons respectively. The domain discrepancy is evaluated by the MMD distance of the 16-dimensional feature representations between the source and the target domains. The discriminator is the final output layer.

The AFN model contains 3 hidden layers with 128, 64 and 64 neurons respectively. The output tensors of the second hidden layer are aligned to a scale vector, which is pre-determined according to \cite{afn}.

In MCD model, the neural network consists of 3 hidden layers with 128, 64 and 64 neurons respectively. For output tensors of the second hidden layer from the target domain, two additional networks are used to construct the regularization term of \cite{mcd}.

The CORAL model is essentially the same model as the DAN model, except that the domain discrepancy is evaluated by the Frobenius norm of the covariance matrices of the 16-dimensional feature representations.

In the CDAN model, the feature extractor is a neural network with 6 hidden layers of 128, 128, 128, 128, 64 and 8 neurons respectively. The marginal divergence is calculated by the Gaussian MMD distance of each dimensional representations among the 8-dimensional features. The copula distance is with respect to the KL divergence. We run the model for 100 trials, and each trial costs about 2-3 minutes.

\subsection{Intra-day equity price regression}

After separating the historical prices of 22 stocks into two domains, we slice the data into pieces with length 12 and package them into batches of size 1024. For each stock in either domain, we use the MinMaxScaler to normalize its price. For each model, we use an LSTM layer as the feature extractor, and conduct the batch normalization for each Linear layer. We train each model for at most 100 epochs, and the early-stopping threshold is set to be 20 epochs. We tune the hyperparameters by grid search, and we also fine-tune the network parameters (including but not limited to number of layers, number of units, etc.).

The LSTM (RNN resp.) model consists of one LSTM (RNN resp.) layer of hidden size 64, and two Linear layers of size 64 and 32 respectively.

The DANN model consists of 3 neural networks, namely a feature extractor, a discriminator and a regressor. The feature extractor contains one LSTM layer of hidden size 64 and a Linear layer of size also 64. The discriminator is a binary classifier, which consists of three Linear layers of size 64, 32 and 16 respectively. The regressor consists of two Linear layers of size 64 and 32 respectively.

The CORAL model consists of one LSTM layer of hidden size 64, and 3 Linear layers of size 64, 32, 16 respectively. After extracting the features by LSTM, we calculate the regularization term according to \cite{sun2016return}.

The DAN model consist of one LSTM layer of hidden size 64, and 3 Linear layers of size 64, 32, 16 respectively. After extracting the features by LSTM, we calculate the MMD with a two-Gaussian-kernel function.

Our model CDAN consists of one LSTM layer of hidden size 64, and two Linear layers of size 64 and 32 respectively. After extracting the features by LSTM, we calculate the divergence between marginal distributions by a two-Gaussian-kernel MMD, and calculate the copula distance with respect to KL divergence. We run the model for 100 trials, and each trial costs about 10-20 minutes.

\bibliographystyle{IEEEtran}
\bibliography{CDAN.bib}


 




\vfill

\end{document}